\documentclass{article}

\PassOptionsToPackage{numbers, compress}{natbib}

\usepackage[preprint]{neurips_2026}

\usepackage[utf8]{inputenc} 
\usepackage[T1]{fontenc}    
\usepackage{hyperref}       
\usepackage{url}            
\usepackage{booktabs}       
\usepackage{amsfonts}       
\usepackage{nicefrac}       
\usepackage{microtype}      
\usepackage{xcolor}         
\usepackage{amsmath}
\usepackage{amsmath,amssymb,amsfonts,bm}
\usepackage{algorithm}
\usepackage{algpseudocode}
\usepackage{graphicx}  
\usepackage[table]{xcolor}
\usepackage{multirow}
\usepackage{caption}
\usepackage{tabularx}
\usepackage{booktabs}
\usepackage{multirow}
\usepackage{makecell}
\usepackage{threeparttable}
\usepackage{siunitx}

\newcolumntype{L}{>{\raggedright\arraybackslash}X}
\usepackage{amsthm}

\newtheorem{proposition}{Proposition}
\newcommand{\best}[1]
{\mathbf{#1}}
\newcommand{\sbest}[1]{\textcolor{brown}{\mathbf{#1}}}

\newtheorem{lemma}{Lemma}[section]

\title{Flow Annealing Posterior Sampling for Function-Space Regression and Inverse Problems}

\author{%
  Yaozhong Shi\\
  \text{California Institute of Technology} \\
  \texttt{yshi5@caltech.edu} \\
  \And
  Zachary E. Ross\\
  \text{California Institute of Technology}\\
  \texttt{zross@caltech.edu} \\
  \AND
  Yisong Yue\\
  \text{California Institute of Technology}\\
  \texttt{yyue@caltech.edu}\\
}

\begin{document}

\maketitle

\begin{abstract}
Principled regression for stochastic processes is a long-standing challenge with deep connections to scientific inverse problems. We introduce Flow Annealing Posterior Sampling (FAPS), to our knowledge the first function-space posterior sampling framework that unifies stochastic-process regression and PDE inverse problems. Built on pretrained function-space flow-matching priors, FAPS enables likelihood-guided posterior inference from sparse and noisy observations, supports variable query discretizations, and avoids explicit prior-density evaluation. Its Langevin correction uses a low-rank covariance preconditioner to exploit dominant function-space correlations across discretizations. Across Gaussian and non-Gaussian stochastic-process regression benchmarks and diverse PDE inverse problems, FAPS produces coherent posterior samples with accurate uncertainty quantification, significantly outperforming existing functional regression baselines and achieving competitive or better PDE noisy inverse performance than diffusion-based posterior samplers while reducing test-time sampling cost.
\end{abstract}
\section{Introduction}

Stochastic processes are foundational models for distributions over functions and play a central role in functional regression, data assimilation, uncertainty quantification, and scientific inverse problems. In these settings, one observes sparse and noisy measurements of an unknown function or physical field and seeks not only a point prediction, but a posterior distribution over all functions consistent with the observations. Classical Gaussian process (GP) regression provides a principled Bayesian solution when the prior is Gaussian and the observation model is simple~\citep{williams_gaussian_2006}, but many scientific processes are non-Gaussian, high-dimensional, and observed through indirect physical measurements ~\citep{shi_universal_2024, stuart_inverse_2010}.

Scientific inverse problems can be viewed as a natural extension of stochastic-process regression. Instead of directly observing function values, one observes quantities generated by a forward operator, often a PDE solution map. Both problems share the same Bayesian structure: infer an unknown function from partial observations using a prior over functions and a likelihood induced by the measurement process. Recent neural processes learn flexible conditional predictors~\citep{abu_hamad_flow_2026}, neural operators learn efficient PDE solution maps~\citep{li_fourier_2021}, and function-space flow matching learns expressive stochastic-process priors~\citep{shi_stochastic_2026}. However, a general posterior sampling framework that turns pretrained function-space flow-matching priors into likelihood-guided inference for both regression and PDE inverse problems remains missing.

In this work, we introduce \emph{Flow Annealing Posterior Sampling} (FAPS), a function-space posterior sampling framework built on pretrained function-space flow-matching priors. FAPS performs posterior inference through a decoupled annealing procedure: samples are transported by the learned flow, corrected by the observation likelihood using Langevin dynamics, and re-bridged across annealing levels. The method avoids explicit prior-density evaluation, supports varying query discretizations, and applies to both direct stochastic-process regression and indirect PDE inverse problems while remaining computationally efficient\footnote{Our code is available at \url{https://github.com/yzshi5/FAPS}}. To exploit the geometry of function-valued data, FAPS uses a low-rank covariance preconditioner in the Langevin correction, allowing posterior updates to follow dominant correlations of the underlying function space under sparse observations.

\begin{figure*}[ht]
    \vspace*{-.3cm}
    \centering

    \includegraphics[width=0.95\textwidth]
    {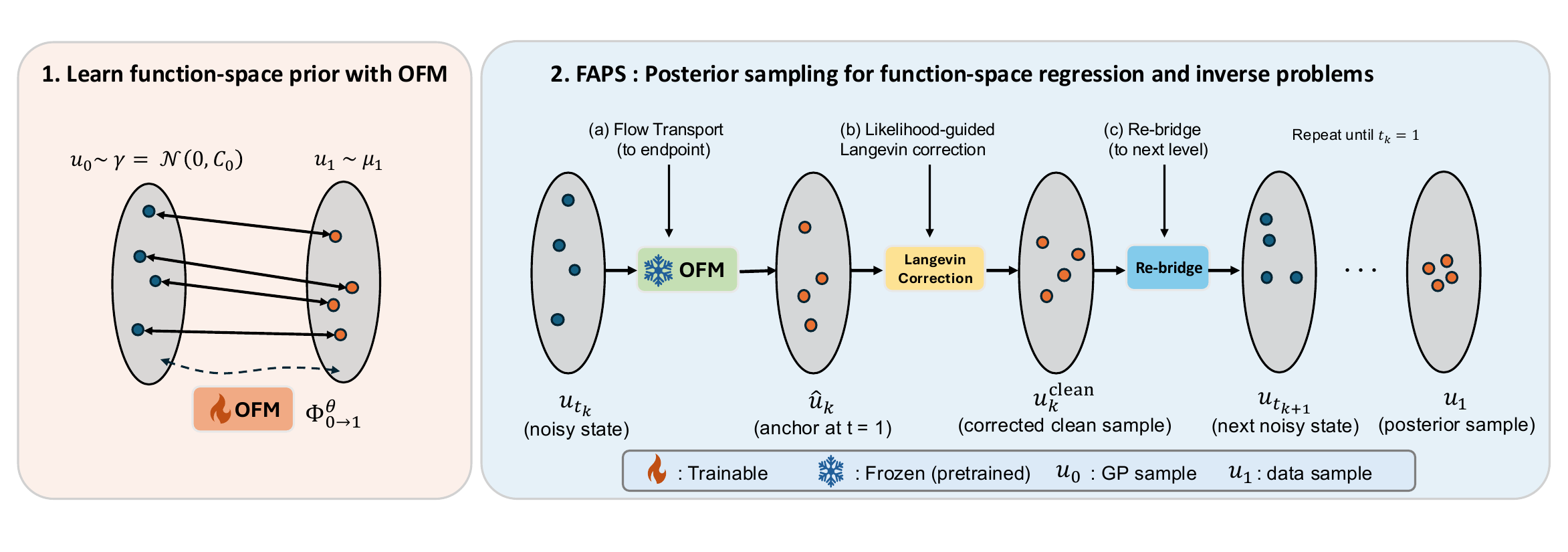}
    \caption{Overview of FAPS. 
    A pretrained Operator Flow Matching (OFM) prior transports a reference GP to the target stochastic process. 
    Given partial observations, FAPS freezes the prior and iteratively transports, corrects, and re-bridges samples to obtain posterior samples without explicit prior-density evaluation.}
    \label{fig:faps_overview}  
\end{figure*}

We summarize our main contributions below:
\begin{itemize}
    \item \textbf{First unified function-space flow-matching posterior sampler.}
    To our knowledge, FAPS is the first posterior sampling framework built on pretrained function-space flow-matching priors that unifies stochastic-process regression and PDE inverse problems. It provides a common likelihood-guided sampler for sparse functional regression and indirect noisy PDE observations.

    \item \textbf{Strong empirical performance.}
    Extensive experiments demonstrate that FAPS achieves state-of-the-art functional regression performance compared with Neural Process variants, and competitive or better noisy PDE inverse performance than diffusion-based posterior samplers, with lower test-time sampling cost.

    \item \textbf{Low-rank covariance-preconditioned Langevin correction.}
    FAPS introduces a low-rank covariance preconditioner for Langevin correction, which exploits dominant function-space correlations and significantly improves posterior updates under sparse observations.

    \item \textbf{Flexible and computationally efficient posterior inference.}
    FAPS is plug-and-play: it leverages pretrained function-space priors and, for PDE inverse problems, pretrained forward surrogates. It supports noisy observations and variable query discretizations without retraining the prior. Empirically, FAPS is substantially more memory- and time-efficient than likelihood-based posterior sampling with flow priors; see Table~\ref{tab:computational-efficiency}.
\end{itemize}

\section{Related work}

\begin{figure*}[!t]
    \vspace*{-0.3cm}
    \centering
    \includegraphics[width=0.95\textwidth,        trim=0 1.9cm 0 0,
        clip]{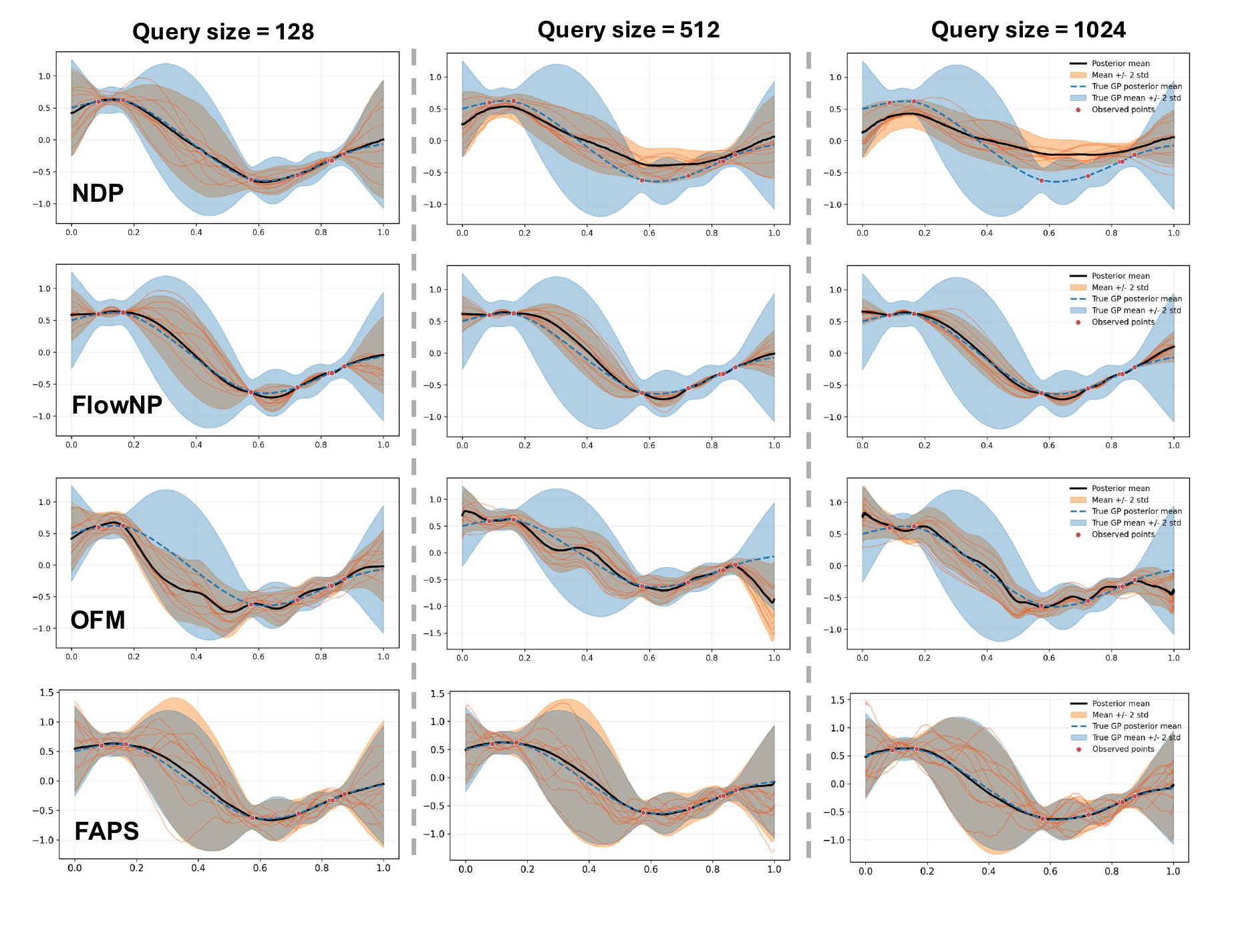}
    \caption{One-dimensional Matérn-kernel GP posterior regression. 
Given seven observations, each method predicts the posterior over query locations in $[0,1]$ at query resolutions 128, 512, and 1024.}
    \label{fig:fig1_GP}
\end{figure*}

\paragraph{Neural operators.}
Neural operators learn mappings between function spaces and have become a core tool for scientific machine learning and PDE surrogate modeling~\citep{li_fourier_2021, kovachki_neural_2023}. Their discretization-flexible formulation makes them well suited for modeling physical fields and solution operators. However, most neural operators are deterministic surrogates and do not directly provide posterior distributions over unknown functions from sparse noisy observations. FAPS builds on the neural-operator function-space perspective, but targets posterior sampling rather than deterministic operator approximation.

\paragraph{Function-space flow matching.}
Flow matching, stochastic interpolants, and rectified flow learn continuous-time transports between probability distributions and provide efficient alternatives to diffusion models~\citep{lipman_flow_2023, albergo_stochastic_2025, liu_flow_2022, tong_improving_2024}. Recent work extends these ideas from finite-dimensional vectors to functions and stochastic processes. Functional flow matching studies generative modeling directly in function spaces~\citep{kerrigan_functional_2024}. Building on this paradigm,  Operator Flow Matching (OFM) learns stochastic-process priors with neural operators and provides finite-dimensional marginals at arbitrary query sets~\citep{shi_stochastic_2026}. Mesh-informed neural operators further extend this direction beyond regular grids and rectangular domains~\citep{shi_mesh-informed_2025}. These methods primarily focus on learning expressive function-space priors. FAPS is complementary: it converts a pretrained function-space flow-matching prior into a likelihood-guided posterior sampler for functional regression and PDE inverse problems.

\paragraph{Neural processes.}
Neural Processes learn conditional distributions over functions from context-target pairs~\citep{garnelo_conditional_2018}. Variants such as Attentive Neural Processes, Convolutional Conditional Neural Processes, Neural Diffusion Processes, and Flow Matching Neural Processes improve expressivity, spatial structure, and conditional sample quality~\citep{kim_attentive_2019,  gordon_convolutional_2019, dutordoir_neural_2023, abu_hamad_flow_2026}. However, these methods are primarily amortized conditional models: they learn a direct map from context observations to target distributions. FAPS instead starts from a pretrained unconditional function-space flow-matching prior and performs likelihood-guided posterior inference at test time, allowing the same prior to be reused across different observation masks, noise levels, query discretizations, and inverse-problem likelihoods. These differences are summarized in Table~\ref{tab:faps_comparison_compact}.

\paragraph{Generative models for PDE solving.}
Generative models have recently been used for PDE solving, uncertainty quantification, and inverse problems. In particular, diffusion-based posterior samplers combine learned priors with observation guidance: DAPS improves inverse problem solving through decoupled noise annealing~\citep{zhang_improving_2025}, FunDPS develops guided diffusion sampling on function spaces~\citep{yao_guided_2026}, and DDIS decouples learned coefficient priors from neural-operator forward models for inverse PDE problems~\citep{lin_decoupled_2026}. BLADE performs derivative-free Bayesian inversion with diffusion priors~\citep{zheng_blade_2026, zheng_inversebench_2025}. These methods demonstrate the value of generative priors for scientific inverse problems, but are primarily diffusion-based and focused on PDE settings. FAPS provides a flow-matching-prior alternative that unifies stochastic-process regression and PDE inverse problems through flow transport, likelihood-guided Langevin correction, and re-bridging in function space.

\begin{figure*}[bt]
    \vspace*{-0.3cm}
    \centering
    \includegraphics[width=1.0\textwidth,trim=0 5.4cm 0 0, clip]{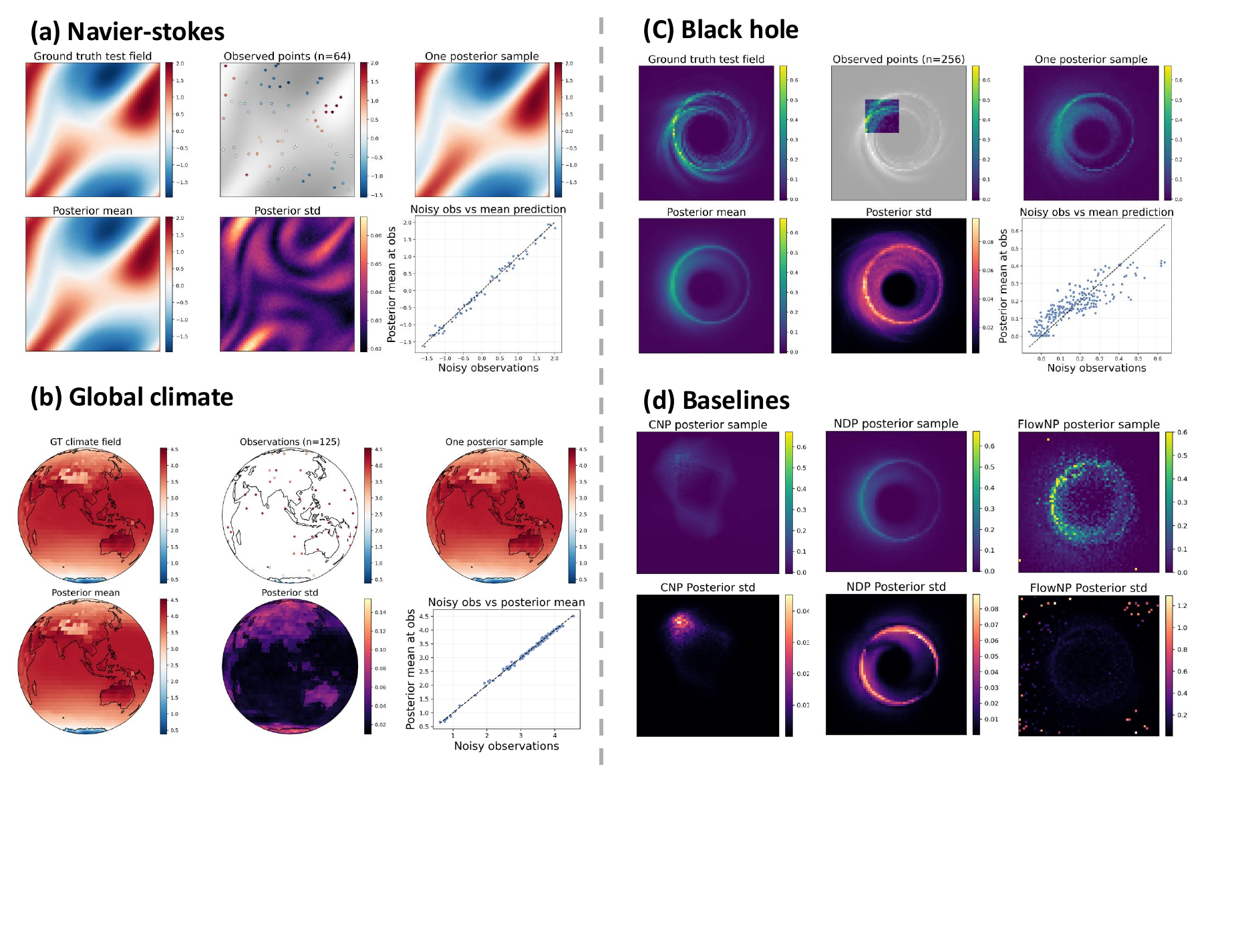}
    \caption{ Non-Gaussian functional regression . Given partial and noisy observation, 
    FAPS is evaluated on (a) Navier--Stokes flow fields, (b) global climate fields on the sphere, and (c) black-hole imaging data. 
     (d) compares representative NP baselines posterior samples}
    \label{fig:non_gaussian_regression}

\end{figure*}

\section{Method}
\label{sec:method}

We present FAPS in the function-space setting. Let \(u_1\in\mathcal U\) denote an
unknown function or physical field on a domain \(D\subset\mathbb R^{d_x}\),
with pointwise values \(u_1(x)\in\mathbb R^{d_u}, \text{where} ~x \in D\). Given sparse and noisy
observations \(y\), our goal is to sample from the posterior distribution over
\(u_1\).

\subsection{Function-space flow matching prior}
\label{sec:function_space_prior}

FAPS builds on a pretrained operator flow-matching prior~\citep{shi_stochastic_2026}. Let
\(\gamma=\mathcal N(0,\mathcal C_0)\) be a Gaussian measure on \(\mathcal U\),
where \(\mathcal C_0\) is a self-adjoint, positive, trace-class covariance
operator. Let \(\mu_1\) denote the target probability measure of the data
functions. The goal of prior learning is to construct a probability flow that
transports \(\gamma\) to \(\mu_1\). In this work, the pretrained prior is learned using the independent-coupling
flow-matching objective~\citep{tong_improving_2024}. We draw independent endpoint samples $
    u_0\sim\gamma,~
    u_1\sim\mu_1,$
and define the noisy straight interpolation in function space
\begin{equation}
    u_t
    =
    t u_1
    +
    (1-t)u_0
    +
    \sigma_{\min}\xi,
    \label{eq:function_independent_path}
\end{equation}

Here \(\sigma_{\min}>0\) is a small constant, which prevents the
training path from becoming singular near \(t=1\) and $    \xi\sim\gamma$. Since the deterministic part of
Eq.~\eqref{eq:function_independent_path} is linear in \(t\), the
conditional velocity given the sampled pair \((u_0,u_1)\) is
\begin{equation}
    v_t(u_t\mid u_0,u_1)=u_1-u_0 .
    \label{eq:function_target_velocity}
\end{equation}
The corresponding marginal velocity field is obtained by averaging over the
conditional path:
\begin{equation}
    v_t^\dagger(u)
    =
    \mathbb E\left[u_1-u_0\mid u_t=u\right].
    \label{eq:function_marginal_velocity}
\end{equation}
We parameterize the marginal velocity field by a neural operator
\(v_\theta:[0,1]\times\mathcal U\to\mathcal U\) and train it with
\begin{equation}
    \mathcal L_{\rm FM}(\theta)
    =
    \mathbb E_{t\sim\mathrm{Unif}(0,1),u_0,u_1,\xi}
    \left[
    \left\|
        v_\theta(t,u_t)
        -
        (u_1-u_0)
    \right\|_{\mathcal U}^2
    \right].
    \label{eq:function_fm_objective}
\end{equation}


After training, \(v_\theta\) defines a probability-flow ~\citep{shi_stochastic_2026, kerrigan_functional_2024} and for \(0\le s\le t\le 1\), we denote by
\(\Phi^\theta_{s\to t}:\mathcal U\to\mathcal U\) the solution map of this ODE :
given an initial state \(u_s\in\mathcal U\) at time \(s\),
\[
    \Phi^\theta_{s\to t}(u_s)= u_s + \int_s^t v_\theta(\tau, u_\tau) d\tau = u_t .
\]

For posterior computation, we work with finite-dimensional marginals induced by
point evaluations. For a query set \(X=\{x_i\}_{i=1}^n\), define
\[
    \Pi_X u_t
    =
    u_{t, X}
    =
    \big(u_t(x_1),\ldots,u_t(x_n)\big)   \in
    (\mathbb R^{d_u})^n
    \cong
    \mathbb R^{n d_u}.
\]
The Gaussian reference measure induces
\begin{equation}
    \gamma_X
    =
    (\Pi_X)_\#\gamma
    =
    \mathcal N(0,\Sigma_0^X),
    \label{eq:gamma_marginal}
\end{equation}
where \(\Sigma_0^X\) is obtained by evaluating the covariance operator
\(\mathcal C_0\) on \(X\). Similarly, the target process induces
\(\mu_{1,X}=(\Pi_X)_\#\mu_1\). As shown in the stochastic-process construction
of OFM~\citep{shi_stochastic_2026}, these finite-dimensional marginals are
consistent across query sets and define a process-level prior.
Thus, the learned OFM prior can be evaluated on arbitrary query sets \(X\). The independent-coupling path also induces the bridge (transition) kernel used in FAPS.
Conditioned on a clean endpoint \(u_{1,X}\), marginalizing over
\(u_{0,X}\sim\gamma_X\) and \(\xi_X\sim\gamma_X\) in
Eq.~\eqref{eq:function_independent_path} gives
\begin{equation}
    q_t(u_{t,X}\mid u_{1,X})
    =
    \mathcal N
    \left(
        t u_{1,X},
        s(t)^2\Sigma_0^X
    \right),
    \qquad
    s(t)=\sqrt{(1-t)^2+\sigma_{\min}^2} \approx 1-t.
    \label{eq:independent_bridge_kernel}
\end{equation}
This bridge kernel is used later to re-bridge corrected clean samples across annealing levels. 

\subsection{Unified posterior formulation}
\label{sec:unified_posterior}

We write both functional regression and PDE inverse problems using the
observation model
\begin{equation}
    y=\mathcal A(u_1)+\epsilon,
    \qquad
    \epsilon\sim\mathcal N(0,\sigma_y^2 I),
    \label{eq:obs_model}
\end{equation}
where \(\mathcal A:\mathcal U\to\mathbb R^m\) is a task-dependent observation
operator and, once again, \(u_1\) denotes a clean function drawn from the target process.
The posterior endpoint law is
\begin{equation}
    \pi_1^y(du_1)
    \propto
    p(y\mid u_1)\,\mu_1(du_1),
    \qquad
    p(y\mid u_1)
    \propto
    \exp\left(
    -\frac{1}{2\sigma_y^2}
    \|y-\mathcal A(u_1)\|_2^2
    \right).
    \label{eq:posterior_measure}
\end{equation}
On a finite query set \(X=\{x_i\}_{i=1}^n\), this corresponds to posterior
inference over $
    u_{1,X}
    =
    \big(u_1(x_1),\ldots,u_1(x_n)\big)
$. For functional regression, the observations are direct noisy evaluations of the
unknown function. Let \(P_\Omega\) denote the masking or point-evaluation
operator on observed locations \(\Omega\subset X\). Then
\begin{equation}
    \mathcal A(u_1)=P_\Omega u_1,
    \qquad
    y=P_\Omega u_1+\epsilon.
    \label{eq:regression_operator}
\end{equation}
For PDE inverse problems, \(u_1\) is an unknown input field, such as a
coefficient, source, or initial condition, and the observations are sparse
measurements of a PDE response. Let \(\mathcal G_\phi\) be a differential PDE
solver or a pretrained neural-operator surrogate for the forward PDE solution
map. Then
\begin{equation}
    \mathcal A(u_1)=P_\Omega \mathcal G_\phi(u_1),
    \qquad
    y=P_\Omega\mathcal G_\phi(u_1)+\epsilon.
    \label{eq:pde_operator}
\end{equation}
Thus, regression and PDE inverse problems differ only through the observation
operator \(\mathcal A\).

\subsection{Flow Annealing Posterior Sampling}
\label{sec:faps}

FAPS samples from Eq.~\eqref{eq:posterior_measure} by alternating between
flow-based prior transport, likelihood-guided correction, and re-bridging.
Let \(0=t_0<t_1<\cdots<t_K=1\) be an annealing schedule. For a finite query set
\(X\), let \(q_t^X(u_t\mid u_1)\) denote the bridge kernel (shown in Eq.~\eqref{eq:independent_bridge_kernel}) induced by the
independent-coupling flow path.
We define the measurement-conditioned annealed marginal
\begin{equation}
    \pi_t^{y,X}(du_t)
    =
    \int q_t^X(du_t\mid u_1)\,\pi_1^{y,X}(du_1),
    \label{eq:annealed_posterior}
\end{equation}
where \(\pi_1^{y,X}\) is the endpoint posterior over clean functions on \(X\).

\begin{proposition}[Annealing and re-bridging]
\label{prop:faps_rebridge}
Assume \(u_{t_k}\sim \pi_{t_k}^{y,X}\). If we sample
\[
    u_1\sim \pi_1^{y,X}(du_1\mid u_{t_k}),
\]
and then re-bridge
\[
    u_{t_{k+1}}\sim q_{t_{k+1}}^X(du_{t_{k+1}}\mid u_1),
\]
then \(u_{t_{k+1}}\sim \pi_{t_{k+1}}^{y,X}\).
\end{proposition}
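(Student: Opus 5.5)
The idea is to view $(u_1,u_{t_k})$ as one joint law and read the conditional $\pi_1^{y,X}(du_1\mid u_{t_k})$ off its disintegration. Define the joint measure on $(u_1,u_{t_k})$ by
\[
    \Gamma_k(du_1,du_{t_k})=\pi_1^{y,X}(du_1)\,q_{t_k}^X(du_{t_k}\mid u_1).
\]
By Eq.~\eqref{eq:annealed_posterior}, its second marginal is exactly $\pi_{t_k}^{y,X}$. Its first marginal is $\pi_1^{y,X}$, because $q_{t_k}^X(\cdot\mid u_1)$ is a probability kernel; by Eq.~\eqref{eq:independent_bridge_kernel} it is the Gaussian $\mathcal N(t_k u_1, s(t_k)^2\Sigma_0^X)$. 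I then interpret $\pi_1^{y,X}(du_1\mid u_{t_k})$ as the regular conditional distribution of $u_1$ given $u_{t_k}$ under $\Gamma_k$. This is the natural reading of the statement, and it exists because we are on the Polish space $\mathbb R^{nd_u}$. Disintegrating along the second coordinate gives
\[
    \Gamma_k(du_1,du_{t_k})=\pi_{t_k}^{y,X}(du_{t_k})\,\pi_1^{y,X}(du_1\mid u_{t_k}).
\]

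\textbf{Step 1: the sampled $u_1$ has the endpoint posterior law.} If $u_{t_k}\sim\pi_{t_k}^{y,X}$ and then $u_1\sim\pi_1^{y,X}(\cdot\mid u_{t_k})$, the pair $(u_1,u_{t_k})$ has law $\Gamma_k$. Hence the marginal of $u_1$ is $\pi_1^{y,X}$. Concretely, for a bounded measurable $f$, Fubini and the two factorizations of $\Gamma_k$ give
\[
    \mathbb E f(u_1)=\int\!\!\int f(u_1)\,\pi_1^{y,X}(du_1\mid u_{t_k})\,\pi_{t_k}^{y,X}(du_{t_k})=\int f\,d\pi_1^{y,X}.
\]

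\textbf{Step 2: re-bridging gives the next annealed marginal.} Given $u_1$, we draw $u_{t_{k+1}}\sim q_{t_{k+1}}^X(\cdot\mid u_1)$. The law of $u_{t_{k+1}}$ is then $\int q_{t_{k+1}}^X(\cdot\mid u_1)\,\pi_1^{y,X}(du_1)$, which is $\pi_{t_{k+1}}^{y,X}$ by definition~\eqref{eq:annealed_posterior}. This holds for any $t_{k+1}\in[0,1]$. The bridge kernels at different times are never required to be Markov-consistent, since both times are tied only through the clean endpoint; this is the "decoupled" feature.

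\textbf{Main obstacle.} The only delicate point is making precise that the conditional in the statement is the one induced by $\Gamma_k$, that is, by the posterior prior and the bridge kernel. It is not a conditional taken under the flow-transported law, nor one computed via Langevin dynamics. Its existence and measurability follow from the disintegration theorem on $\mathbb R^{nd_u}$, and Step 1 uses nothing beyond $\Gamma_k$ having the stated marginals. One boundary case needs a remark. At $t_0=0$ we have $s(0)^2>0$, so the kernel $q_0^X=\mathcal N(0,s(0)^2\Sigma_0^X)$ does not depend on $u_1$; the conditional is then just $\pi_1^{y,X}$ and the argument is unchanged. At $t_K=1$ we have $s(1)=\sigma_{\min}$, which is not degenerate. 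No densities are required, so the proof is purely measure-theoretic.
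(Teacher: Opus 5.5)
Your proof is correct and follows the paper's argument. Both rest on the joint-law identity $\pi_1^{y,X}(du_1\mid u_{t_k})\,\pi_{t_k}^{y,X}(du_{t_k})=\pi_1^{y,X}(du_1)\,q_{t_k}^X(du_{t_k}\mid u_1)$, integrate out $u_{t_k}$ to see that $u_1\sim\pi_1^{y,X}$, and then apply the definition of the annealed marginal. The only difference is minor: the paper gets that identity from an explicit Bayes formula with Gaussian bridge densities (and its appendix uses the noise-free bridge $s(t)=1-t$, so $q_1^X=\delta_{u_1}$), whereas you take the conditional as the disintegration of the joint law, which needs no densities and so covers either bridge convention.
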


Proposition~\ref{prop:faps_rebridge} shows that exact sampling from the clean
conditional \(\pi_1^{y,X}(du_1\mid u_{t_k})\), followed by re-bridging, preserves
the desired annealed posterior marginal. The proof is provided in the Appendix~\ref{app:faps_proof_algorithm}.
In practice, this clean conditional is intractable because the learned
function-space prior is implicit. FAPS therefore approximates it using an OFM
endpoint anchor and likelihood-guided Langevin correction. At time \(t_k\), given a bridge state \(u_{t_k}\), we integrate the pretrained
flow to the endpoint:
\begin{equation}
    \hat u_k
    =
    \Phi^\theta_{t_k\to1}(u_{t_k}).
    \label{eq:anchor}
\end{equation}
The endpoint \(\hat u_k\) acts as a clean prior-consistent anchor associated
with the current bridge state. In the following, we suppress the subscript \(X\) when no ambiguity arises; all variables in the practical sampler are understood as finite-dimensional evaluations on the query set \(X\). Around this anchor, inspired by~\citep{zhang_improving_2025}, we use a local
Gaussian approximation
\begin{equation}
    p(u_1\mid u_{t_k})
    \approx
    \mathcal N(\hat u_k,\lambda_k^2 C_X),
    \qquad
    \lambda_k=\max\big(\lambda_{\min},
        \lambda_{\mathrm{scale}}(1-t_k)\big).
    \label{eq:local_anchor}
\end{equation}
where \(C_X\) is an empirical covariance preconditioner on the query set, $\lambda_{\min}$ is a small constant and $\lambda_{\rm scale}$ equals 1 by default. The
local posterior correction target is
\begin{equation}
    \widetilde\pi_k(u_1)
    \propto
    p(y\mid u_1)\,
    \mathcal N(u_1;\hat u_k,\lambda_k^2 C_X).
    \label{eq:local_posterior}
\end{equation}

Starting from \(u_1^{(0)}=\hat u_k\), FAPS performs \(L\) Langevin correction
steps:
\begin{equation}
    u_1^{(\ell+1)}
    =
    u_1^{(\ell)}
    +
    \eta
    \left[
        -\frac{u_1^{(\ell)}-\hat u_k}{\lambda_k^2}
        +
         C_X\nabla_{u_1}\log p(y\mid u_1^{(\ell)})
    \right]
    +
    \sqrt{2\eta}\,\zeta^{(\ell)},
    \qquad
    \zeta^{(\ell)}\sim\mathcal N(0,C_X).
    \label{eq:faps_langevin}
\end{equation}
Here, \(\eta\) denotes the Langevin step size. The covariance \(C_X\) preconditions the likelihood gradient
and defines the Langevin noise covariance. The anchor term is kept explicit as
\(- (u_1-\hat u_k)/\lambda_k^2\), which pulls the sample toward the flow-predicted clean endpoint. The likelihood gradient is
\begin{equation}
    \nabla_{u_1}\log p(y\mid u_1)
    =
    \frac{1}{\sigma_y^2}
    J_{\mathcal A}(u_1)^\ast
    \big(y-\mathcal A(u_1)\big),
    \label{eq:likelihood_gradient}
\end{equation}
where \(J_{\mathcal A}(u_1)^\ast\) is the adjoint of the Fréchet derivative of
\(\mathcal A\). For direct regression, this becomes
\begin{equation}
    \nabla_{u_1}\log p(y\mid u_1)
    =
    \frac{1}{\sigma_y^2}
    P_\Omega^\top
    (y-P_\Omega u_1).
    \label{eq:regression_gradient}
\end{equation}
For PDE inverse problems with \(\mathcal A=P_\Omega\mathcal G_\phi\), it becomes
\begin{equation}
    \nabla_{u_1}\log p(y\mid u_1)
    =
    \frac{1}{\sigma_y^2}
    J_{\mathcal G_\phi}(u_1)^\ast
    P_\Omega^\top
    \big(y-P_\Omega\mathcal G_\phi(u_1)\big),
    \label{eq:pde_gradient}
\end{equation}
which is computed by automatic differentiation through the pretrained neural
operator \(\mathcal G_\phi\). After \(L\) Langevin steps, we obtain \(u_k^{\rm clean}=u_1^{(L)}\). We then
re-bridge the corrected clean sample to the next annealing level:
\begin{equation}
    u_{t_{k+1}}
    =
    \begin{cases}
    t_{k+1}u_k^{\rm clean}
    +
    s(t_{k+1})\xi_{k+1},
    & t_{k+1}<1,\\
    u_k^{\rm clean},
    & t_{k+1}=1,
    \end{cases}
    \qquad
    \xi_{k+1}\sim\mathcal N(0,\Sigma_0^X),
    \label{eq:rebridge}
\end{equation}
Repeating this transport--correction--rebridging procedure from \(t_0=0\) to
\(t_K=1\) yields approximate posterior samples from \(p(u_1\mid y)\). The learned
prior density is never explicitly evaluated.
\subsection{Empirical low-rank covariance preconditioning}
\label{sec:low_rank_cov}

The covariance preconditioner \(C_X\) in Eq.~\eqref{eq:faps_langevin} is
estimated from clean samples of the learned target process, not from the initial
Gaussian reference samples. Crucially, this estimation is performed entirely offline as a one-time preprocessing step using the unconditional prior. Once calculated, $C_X$ is frozen and reused across all subsequent test-time observation masks and noise realizations, requiring zero online re-estimation overhead. Specifically, we draw
\(u_0^{(j)}\sim\gamma_X\), transport them through the pretrained flow, and obtain
\begin{equation}
    u_1^{(j)}
    =
    \Phi^\theta_{0\to1}(u_0^{(j)}),
    \qquad
    j=1,\ldots,N_c.
    \label{eq:clean_cov_samples}
\end{equation}
We then compute the empirical covariance
\begin{equation}
    \widehat C_X
    =
    \frac{1}{N_c-1}
    \sum_{j=1}^{N_c}
    \left(u_1^{(j)}-\bar u_1\right)
    \left(u_1^{(j)}-\bar u_1\right)^\top,
    \qquad
    \bar u_1=\frac{1}{N_c}\sum_{j=1}^{N_c}u_1^{(j)}.
    \label{eq:empirical_covariance}
\end{equation}
For high-dimensional fields, we use a low-rank approximation
\begin{equation}
    C_X
    =
    Q_r\Lambda_rQ_r^\top+\sigma_{\rm res}^2 I,
    \label{eq:low_rank_covariance}
\end{equation}
where \(Q_r\) and \(\Lambda_r\) contain the leading empirical eigenvectors and
eigenvalues, and the residual diagonal term stabilizes directions outside the
dominant subspace. In implementation, a Cholesky factor of \(C_X\) is used to
apply covariance matrix-vector products and to sample the Langevin noise
\(\zeta\sim\mathcal N(0,C_X)\). This preconditioner captures dominant correlations of the learned target
process. Under sparse observations, the raw likelihood gradient is localized to
observed coordinates or to the adjoint of a sparse PDE observation operator.
Multiplication by \(C_X\) propagates this information along correlated
function-space modes, significantly improving posterior correction in unobserved regions; see Appendix~\ref{app:lowrank-cov-ablation} for a detailed ablation and scaling study.

\section{Experiments}
\label{sec:experiments}

\begin{figure*}[ht]
    \centering
\includegraphics[width=1.0\textwidth,trim=0 3.8cm 0 0, clip]{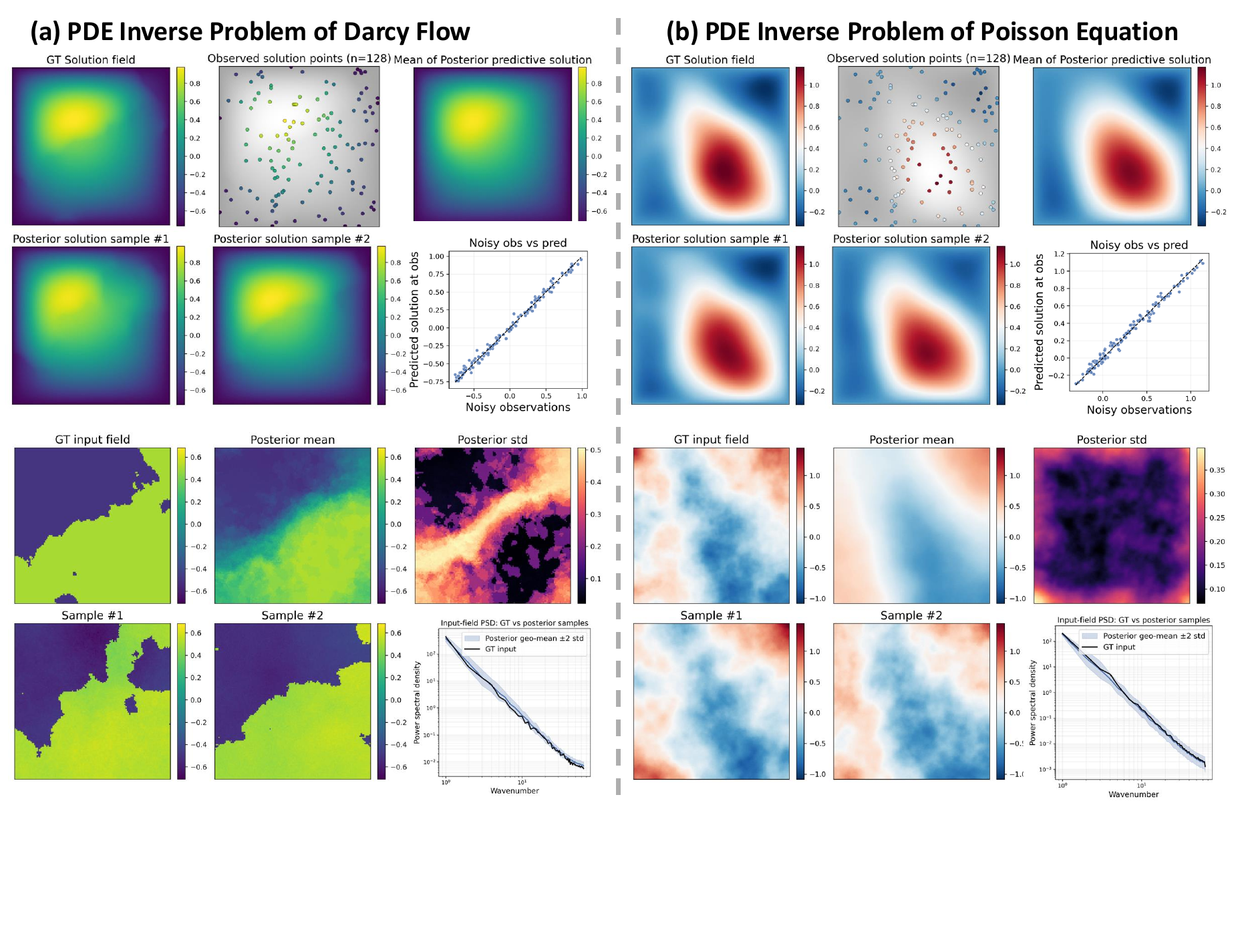}
\caption{PDE inverse problems with 128 noisy solution observations ($0.7\%$) on a $128\times128$ resolution. For (a) Darcy flow and (b) the Poisson equation, FAPS infers posterior input fields and corresponding posterior predictive solution fields from sparse noisy measurements.}

    \label{fig:pde_inverse}
\end{figure*}

We evaluate FAPS as a posterior sampler for pretrained function-space flow-matching priors under sparse and noisy observations. The experiments cover three settings: one-dimensional GP regression with exact reference posteriors, non-Gaussian functional regression on scientific field data, and PDE inverse problems with indirect solution observations. Across all experiments, FAPS uses a pretrained prior and performs posterior inference at test time without retraining for new observation masks or noise realizations. Full dataset details, observation settings, and evaluation metrics are provided in Appendix~\ref{sec:experimental_setup}.

\subsection{GP Functional Regression}
\label{sec:gp_regression}

We first consider one-dimensional GP regression, where the exact posterior is available and therefore provides a controlled benchmark for posterior sampling. We evaluate both a stationary Matérn GP and a nonstationary Gibbs-kernel GP. For each test function, we observe a small set of noisy context values and sample the posterior distribution over the full query grid at resolutions \(128\) and \(512\). Figure~\ref{fig:fig1_GP} shows representative Matérn posterior samples. FAPS closely matches the reference posterior mean and uncertainty across query resolutions. In contrast, neural-process baselines tend to over-smooth the posterior or produce miscalibrated uncertainty and their performance degrades at higher query resolutions, reflecting that marginal consistency is only approximated rather than rigorously enforced. Direct OFM posterior sampling is also less accurate under sparse observations.

Tables~\ref{tab:FAPS_GP_matern} and~\ref{tab:FAPS_GP_Gibbs} report posterior-distribution metrics over the test set, comparing FAPS with Conditional Neural Processes (CNP;~\citep{garnelo_conditional_2018}), Attentive Neural Processes (ANP;~\citep{kim_attentive_2019}), Neural Diffusion Processes (NDP;~\citep{dutordoir_neural_2023}), Flow Matching Neural Processes (FlowNP;~\citep{abu_hamad_flow_2026}), and OFM posterior sampling~\citep{shi_stochastic_2026}. We use Sliced Wasserstein Distance (SWD) and Maximum Mean Discrepancy (MMD), which directly compare generated posterior samples with reference posterior samples; lower values indicate better posterior agreement. FAPS achieves the best performance across both GP priors and query resolutions, demonstrating that likelihood-guided flow annealing improves the full posterior distribution, not merely pointwise reconstruction.

\subsection{Non-Gaussian Functional Regression}
\label{sec:high_dim_regression}

We next evaluate direct functional regression on complex non-Gaussian fields, including Navier--Stokes vorticity, black-hole imaging data, and global climate fields. Unlike the GP benchmarks, exact posterior distributions are unavailable for these datasets. We therefore assess both probabilistic calibration and reconstruction quality. We use CRPS (Continuous Ranked Probability Score) to measure the quality of the predictive posterior distribution, SSR (Spread-Skill Ratio) to assess calibration between posterior spread and prediction error, PSNR (Peak Signal-to-Noise Ratio) and SSIM (Structural Similarity Index) to measure reconstruction fidelity, and Relative \(L^2\) to measure normalized reconstruction error. For CRPS and Relative \(L^2\), lower is better; for PSNR and SSIM, higher is better; for SSR, values closer to one indicate better calibration.

Figure~\ref{fig:non_gaussian_regression} shows representative posterior means, posterior samples, and uncertainty maps. Across all three datasets, FAPS reconstructs the dominant spatial structures from sparse observations while producing coherent posterior samples. For Navier--Stokes, FAPS captures large-scale flow patterns; for black-hole imaging, it preserves the ring-like morphology; and for global climate, it produces smooth posterior reconstructions on the spherical domain. The observation-versus-prediction plots show that the posterior mean is consistent with the noisy observations. In contrast, the baselines often over-smooth the posterior mean, underestimate uncertainty, or generate samples with degraded spatial structure.

Table~\ref{tab:reg_result_nonGP} reports quantitative comparisons. FAPS achieves the strongest overall performance, with the best or near-best calibration and reconstruction metrics across Navier--Stokes, black-hole, and global climate datasets. These results show that FAPS scales beyond analytically tractable GP posteriors to complex learned stochastic-process priors.

\begin{table*}[h]
\caption{One-dimensional Matérn GP regression. Lower SWD and MMD indicate closer agreement with the reference posterior sample distribution. Best values are in bold.}
\centering
{
\begin{tabular}{@{}cccccc@{}} 
\toprule
\multicolumn{1}{c}{Dataset $\rightarrow$} & \multicolumn{2}{c}{Matern GP - Query size=128} & \multicolumn{2}{c}{Matern GP - Query size=512} \\ 
\cmidrule(lr){2-3} \cmidrule(lr){4-5}  
Algorithm $\downarrow$ Metric $\rightarrow$  & SWD & MMD & SWD & MMD \\ 
\midrule

TNP & $7.04 \cdot 10^{-1}$ & $5.79 \cdot 10^{-1}$ &$7.24 \cdot 10^{-1}$ & $5.52 \cdot 10^{-1}$\\
CNP & $2.02 \cdot 10^{-1}$ & $1.87 \cdot 10^{-1}$ &$2.01 \cdot 10^{-1}$ & $1.87 \cdot 10^{-1}$  \\
ANP & $2.04 \cdot 10^{-1}$ & $1.56 \cdot 10^{-1}$ & $2.03 \cdot 10^{-1}$& $1.61 \cdot 10^{-1}$ \\
NDP & $2.65 \cdot 10^{-1}$ & $2.78 \cdot 10^{-1}$ &$4.65 \cdot 10^{-1}$ & $5.08 \cdot 10^{-1}$ \\
FlowNP & $2.24 \cdot 10^{-1}$ & $2.14 \cdot 10^{-1}$ &$2.66 \cdot 10^{-1}$ & $2.19 \cdot 10^{-1}$  \\
OFM & $ 2.17 \cdot 10^{-1}$ & $2.16 \cdot 10^{-1}$ & $2.92 \cdot 10^{-1}$ & $ 2.87 \cdot 10^{-1}$ \\    \rowcolor{yellow!25} $\mathbf{FAPS  (Ours)}$ &  $ \mathbf{1.42 \cdot 10^{-1}}$ & $ \mathbf{1.28 \cdot 10^{-1}}$ & $ \mathbf{1.47 \cdot 10^{-1}}$ & $ \mathbf{1.36 \cdot 10^{-1}}$\\
\bottomrule

\end{tabular}
}
\label{tab:FAPS_GP_matern}
\end{table*}

\begin{table}[h]

\caption{ Non-Gaussian functional regression. Lower CRPS is better, SSR is best near 1, PSNR and SSIM are higher better, and relative $L^2$ is lower better. Best values are bolded.}

\footnotesize
\centering
\renewcommand{\arraystretch}{0.9}
\setlength{\tabcolsep}{3.pt} %
{
\begin{tabular}
{@{}c  !{\vrule width 0.5pt}  c !{\vrule width 0.5pt}cccccc@{}}
\toprule
Datasets & Method & CRPS $\downarrow$  & SSR $\rightarrow 1$  & PSNR $\uparrow$ & SSIM $\uparrow$ & Relative $L^2 \downarrow$ \\ 
\midrule
\multirow{7}{*}{Navier-Stokes}
& CNP &  $ 7.37 \cdot 10^{-2}  $& $7.53 \cdot 10^{-1} $ & $2.60 \cdot 10^{~1}$&  $6.96 \cdot 10^{-1}$ & $2.10 \cdot 10^{-1}$  \\
& ANP & $ 8.07 \cdot 10^{-2}$& $8.90 \cdot 10^{-1} $ & $2.83 \cdot 10^{~1}$ &  $8.48 \cdot 10^{-1}$ & $1.55 \cdot 10^{-1}$ \\
& NDP & $ 3.81 \cdot 10^{-1}$& $1.59 \cdot 10^{-1} $ & $1.61 \cdot 10^{~1}$ &  $4.81 \cdot 10^{-1}$ & $5.87 \cdot 10^{-1}$ \\
& FlowNP & $ 1.26 \cdot 10^{-1}$& $ 1.28 \cdot 10^{~0} $ & $1.99 \cdot 10^{~1}$ &  $4.47 \cdot 10^{-1}$ & $3.99 \cdot 10^{-1}$ \\ 
& OFM & $ 2.97 \cdot 10^{-2}$& $ 1.04 \cdot 10^{~0} $ & $\best{3.52 \cdot 10^{1}}$ &  $\best{9.21 \cdot 10^{-1}}$ & $1.09 \cdot 10^{-1}$ \\ 
&   FAPS (Ours) &    $ \best{2.79 \cdot 10^{-2}}$& $ \best{1.01 \cdot 10^{~0}}$ & $ 3. 43 \cdot 10^{1}$ &  $9.09 \cdot 10^{-1}$ & $\best{1.06 \cdot 10^{-1}}$
\\ \midrule
\multirow{7}{*}{Black hole}
& CNP & $ 2.18 \cdot 10^{-2} $& $1.31 \cdot 10^{~0} $ & $1.18 \cdot 10^{~1}$ &  $1.39 \cdot 10^{-1}$ & $1.58 \cdot 10^{~0}$  \\
& ANP & $ 1.93 \cdot 10^{-2}$& $1.51 \cdot 10^{-1} $ & $1.97 \cdot 10^{~1}$ &  $3.48 \cdot 10^{-1}$ & $6.40 \cdot 10^{-1}$ \\
& NDP & $ 2.30 \cdot 10^{-2}$& $1.81 \cdot 10^{-1} $ & $1.68 \cdot 10^{~1}$ &  $4.03\cdot 10^{-1}$ & $8.59 \cdot 10^{-1}$ \\
& FlowNP & $ 1.49 \cdot 10^{-2}$& $\best{7.88 \cdot 10^{-1}}$ & $1.83 \cdot 10^{~1}$ &  $2.11 \cdot 10^{-1}$ & $7.58 \cdot 10^{-1}$ \\
& OFM & $ 1.33 \cdot 10^{-2}$& $3.62 \cdot 10^{-1}$ & $1.82 \cdot 10^{~1}$ &  $4.04 \cdot 10^{-1}$ & $7.40 \cdot 10^{-1}$  \\
& FAPS (Ours)& $ \best{1.26 \cdot 10^{-2}}$& $7.27 \cdot 10^{-1}$ & $\best{2.05 \cdot 10^{~1}}$ &  $\best{5.48 \cdot 10^{-1}}$ & $\best{5.71 \cdot 10^{-1}}$\\ \midrule
\multirow{6}{*}{Global Climate}
&CNP & $ 8.91 \cdot 10^{-2}  $& $\best{9.70 \cdot 10^{-1}} $ & $2.37 \cdot 10^{~1}$&  $6.48 \cdot 10^{-1}$ & $8.23 \cdot 10^{-2}$  \\
&ANP & $ 5.98\cdot 10^{-2}$& $7.10 \cdot 10^{-2} $ &  $3.18 \cdot 10^{1}$ & $9.14 \cdot 10^{-1}$ & $3.22 \cdot 10^{-2}$ \\
&NDP & $1.45 \cdot 10^{-1} $& $2.87 \cdot 10^{-1} $ & $2.51 \cdot 10^{~1}$ &$8.99 \cdot 10^{-1}$ & $7.77 \cdot 10^{-2}$ \\
&FlowNP & $ 1.19 \cdot 10^{-1}$& $ 2.38 \cdot 10^{-1} $ & $2.79 \cdot 10^{~1}$& $9.52 \cdot 10^{-1}$ & $5.02 \cdot 10^{-2}$ \\

& FAPS (Ours) & $\best{2.28 \cdot 10^{-2}}$ & $8.90 \cdot 10^{-1}$ & $\best{3.43 \cdot 10^{~1}}$&  $\best{9.67 \cdot 10^{-1}}$ & $\best{2.42 \cdot 10^{-2}}$\\
\bottomrule
\end{tabular}}
\label{tab:reg_result_nonGP}
\end{table}

\begin{table}[h]

\caption{PDE inverse problems at resolution $128 \times 128$ from 128 noisy pointwise observations for Darcy flow, Poisson, Helmholtz, and Navier--Stokes benchmarks. Best and second-best results are highlighted in bold and brown bold, respectively.}

\footnotesize
\centering
\renewcommand{\arraystretch}{0.9}
\setlength{\tabcolsep}{3.pt} %
{
\begin{tabular}
{@{}c  !{\vrule width 0.5pt}  c !{\vrule width 0.5pt}cccccc@{}}
\toprule
Datasets & Method & CRPS $\downarrow$  & SSR $\rightarrow 1$  & PSNR $\uparrow$ & SSIM $\uparrow$ & Relative $L^2 \downarrow$ \\ 
\midrule
\multirow{5}{*}{Darcy Flow}
& DiffusionPDE &  $ \best{5.25 \cdot 10^{-2}}$& $\best{1.02 \cdot 10^{~0}}$ & $\best{2.46 \cdot 10^{~1}}$ &  $\best{7.61 \cdot 10^{-1}}$ & $\best{6.06 \cdot 10^{-1}}$   \\
& FunDPS & $ 1.24 \cdot 10^{-1}$& $1.09 \cdot 10^{~1} $ & $2.02 \cdot 10^{~1}$ &  $4.71 \cdot 10^{-1}$ & $9.92 \cdot 10^{-1}$ \\
& DDIS & $ 1.67 \cdot 10^{-1}$& $1.15 \cdot 10^{~0} $ & $1.92 \cdot 10^{~1}$ &  $2.86 \cdot 10^{-1}$ & $1.12 \cdot 10^{~0}$ \\
& FAPS-FNO & $ 1.08 \cdot 10^{-1}$& $\sbest{1.08 \cdot 10^{~0}} $ & $\sbest{2.15 \cdot 10^{~1}}$ &  $\sbest{5.57 \cdot 10^{-1}}$ & $\sbest{8.66 \cdot 10^{-1}}$ \\  
&   FAPS-UNet & $ \sbest{1.07 \cdot 10^{-1}}$& $1.10 \cdot 10^{~0} $ & $2.14 \cdot 10^{~1}$ &  $5.38 \cdot 10^{-1}$ & $8.74 \cdot 10^{-1}$
\\ \midrule
\multirow{5}{*}{Poisson Equation}
& DiffusionPDE &   $ 1.05 \cdot 10^{-1}$& $1.09 \cdot 10^{~0} $ & $2.55 \cdot 10^{~1}$ &  $5.29 \cdot 10^{-1}$ & $6.27 \cdot 10^{-1}$ \\
& FunDPS & $ 1.07 \cdot 10^{-1}$& $1.05 \cdot 10^{~0} $ & $2.54 \cdot 10^{~1}$ &  $5.30 \cdot 10^{-1}$ & $6.32 \cdot 10^{-1}$ \\
& DDIS & $ 9.12 \cdot 10^{-2}$& $\sbest{1.03 \cdot 10^{~0}}$ & $2.65 \cdot 10^{~1}$ &  $5.49 \cdot 10^{-1}$ & $5.62 \cdot 10^{-1}$ \\
& FAPS-FNO & $ \sbest{8.86 \cdot 10^{-2}}$& $\best {9.80 \cdot 10^{-1}} $ & $\sbest{2.72 \cdot 10^{~1}}$ &  $\sbest{6.05 \cdot 10^{-1}}$ & $\sbest{5.15 \cdot 10^{-1}}$ \\  
&   FAPS-UNet & $ \best{8.76 \cdot 10^{-2}}$& $ 8.80 \cdot 10^{-1}$ & $\best{2.77 \cdot 10^{~1}}$ &  $\best{6.17 \cdot 10^{-1}}$ & $\best{4.86 \cdot 10^{-1}}$\\ \midrule
\multirow{5}{*}{Helmholtz Equation}
& DiffusionPDE &  $ 9.83 \cdot 10^{-2}$& $1.06 \cdot 10^{~0} $ & $2.60 \cdot 10^{~1}$ &  $5.58 \cdot 10^{-1}$ & $5.86 \cdot 10^{-1}$  \\
& FunDPS & $ 1.63 \cdot 10^{-1}$& $1.22 \cdot 10^{~0} $ & $2.13 \cdot 10^{~1}$ &  $3.29 \cdot 10^{-1}$ & $9.93 \cdot 10^{-1}$ \\
& DDIS & $ 9.19 \cdot 10^{-2}$& $\sbest{1.04 \cdot 10^{~0}} $ & $2.61 \cdot 10^{~1}$ &  $5.56 \cdot 10^{-1}$ & $5.81 \cdot 10^{-1}$ \\
& FAPS-FNO & $ \sbest{8.60 \cdot 10^{-2}}$& $\best{9.81 \cdot 10^{-1}}$ & $\sbest{2.73 \cdot 10^{~1}}$ &  $\sbest{5.99 \cdot 10^{-1}}$ & $\sbest{5.01 \cdot 10^{-1}}$ \\  
&   FAPS-UNet & $ \best{8.59 \cdot 10^{-2}}$& $8.95 \cdot 10^{-1}$ & $\best{2.78 \cdot 10^{~1}}$ &  $\best{6.13 \cdot 10^{-1}}$ & $\best{4.75 \cdot 10^{-1}}$\\ \midrule
\multirow{5}{*}{Navier-Stokes (PDE)}
& DiffusionPDE &  $ 8.05 \cdot 10^{-2}$ & $1.04 \cdot 10^{~0}$ &  $2.79 \cdot 10^{~1}$ & $6.37 \cdot 10^{-1} $  & $4.21 \cdot 10^{-1}$  \\
& FunDPS & $ 9.52 \cdot 10^{-2}$& $1.07 \cdot 10^{~0} $ & $2.65 \cdot 10^{~1}$ &  $5.87 \cdot 10^{-1}$ & $4.95 \cdot 10^{-1}$ \\
& DDIS & $\best{6.02 \cdot 10^{-2}}$& $1.08 \cdot 10^{~0} $ & $\best{3.00 \cdot 10^{~1}}$ &  $\best{6.73 \cdot 10^{-1}}$ & $\best{3.30 \cdot 10^{-1}}$ \\
& FAPS-FNO & $ 8.12 \cdot 10^{-2}$& $\sbest{9.79 \cdot 10^{-1}}$ & $2.82 \cdot 10^{~1}$ &  $6.31 \cdot 10^{-1}$ & $4.11 \cdot 10^{-1}$ \\  
&   FAPS-UNet & $ \sbest{8.01 \cdot 10^{-2}}$& $\best{9.87 \cdot 10^{-1}}$ & $\sbest{2.86 \cdot 10^{~1}}$ &  $\sbest{6.44 \cdot 10^{-1}}$ & $\sbest{3.89 \cdot 10^{-1}}$\\
\bottomrule
\end{tabular}}
\label{tab:PDE_inverse_128_points}
\end{table}

\subsection{PDE Inverse Problems}
\label{sec:pde_inverse}

Finally, we evaluate FAPS on PDE inverse problems, where the unknown field \(u_1\) is inferred indirectly from sparse noisy measurements of a PDE response. We consider four representative settings: Darcy flow, Poisson, Helmholtz, and Navier--Stokes inverse problems. In all cases, we use the same posterior sampler; only the observation operator changes from direct point evaluation to the composed PDE observation map \(P_\Omega \mathcal{G}_\phi\), where \(\mathcal{G}_\phi\) is a pretrained neural-operator surrogate. To evaluate different architectural choices, we report two variants: FAPS-FNO, our default formulation leveraging an infinite-dimensional function-space neural operator prior, and FAPS-UNet, which utilizes a standard finite-dimensional grid-based prior to demonstrate backward compatibility. Additional details on the PDE setups, baseline implementation, observation settings, and evaluation metrics are provided in ~\ref{app:baseline_implementation}.

Figure~\ref{fig:pde_inverse} visualizes representative Darcy flow and Poisson inverse problems. For each test case, we draw 32 posterior samples of the unknown input field and pass each sample through \(\mathcal{G}_\phi\) to obtain the corresponding posterior predictive solution field. From only 128 noisy solution observations on a \(128\times128\) grid, FAPS produces input-field samples whose predicted solutions remain consistent with the sparse measurements. For Darcy flow, the samples preserve sharp coefficient-interface structures, while for the Poisson problem, they recover coherent large-scale source patterns. The output-field diagnostic plots compare noisy observed solution values with mean posterior predictive values at the same observation locations; their alignment with the diagonal indicates likelihood consistency after the sampled inputs are mapped through the surrogate forward model. For the input fields, we compute the power spectral density (PSD) over the posterior samples and plot the geometric mean together with the geometric standard-deviation band. The agreement with the ground-truth PSD shows that FAPS captures the dominant spatial-frequency structure of the unknown input field while retaining nontrivial posterior variability.

Table~\ref{tab:PDE_inverse_128_points} reports quantitative comparisons with diffusion-based PDE posterior samplers. FAPS achieves the best performance on most Poisson and Helmholtz metrics, remain competitive (second best) on Darcy flow and Navier--Stokes, and provide strong calibration across the benchmarks. Importantly, these results are obtained with substantially lower test-time cost. As shown in Table~\ref{tab:poisson_runtime}, FAPS is \(1.73\times\) faster than DiffusionPDE, \(1.55\times\) faster than FunDPS, and \(2.37\times\) faster than DDIS. Overall, FAPS achieves comparable or better posterior performance than diffusion-based baselines on most PDE inverse benchmarks while using lower computational overhead.
Finally, because the FAPS-FNO prior is defined in function space, FAPS supports zero-shot PDE inverse inference on query meshes not seen during prior training. Appendix~\ref{app:zero_shot_super_resolution} demonstrates this capability on a \(160\times160\) Darcy inverse problem using only 128 noisy solution observations, corresponding to approximately \(0.5\%\) of the solution grids.

\section{Conclusion}

We presented FAPS, a function-space posterior sampling framework built on pretrained function-space flow-matching priors. By using the learned flow trajectory as an annealing path, FAPS avoids explicit prior-density evaluation, supports varying query discretizations, and unifies direct functional regression and PDE inverse problems through a common observation-operator formulation. Across Gaussian-process, high-dimensional non-Gaussian, and PDE inverse benchmarks, FAPS produces coherent posterior samples and achieves state-of-the-art performance on functional regression tasks. These results suggest that function-space flow-matching priors can serve as reusable Bayesian priors for scalable uncertainty-aware inference in scientific machine learning, opening new directions for posterior sampling, functional regression, and inverse modeling.

Despite these advantages, FAPS relies on an approximate local Gaussian correction around the flow-predicted endpoint. Appendix~\ref{app:faps_error_convergence} analyzes how local transition errors propagate across annealing levels, but a full non-asymptotic convergence theory for the practical sampler remains future work. Code is available at~\url{https://github.com/yzshi5/FAPS}.

\subsection*{Acknowledgments}
We thank Jiachen Yao and Zirui Wang for helpful discussions that contributed to the formulation of the ideas in this work. ZER is supported by a fellowship from the David and Lucile Packard Foundation. 
\clearpage

\bibliography{FAPS}

\appendix

\clearpage

\section{Proof and Algorithm for FAPS}
\label{app:faps_proof_algorithm}

\subsection{Proof of Proposition~\ref{prop:faps_rebridge}}
\label{app:proof_faps_rebridge}

For completeness, we restate the idealized annealing property used by FAPS.
For a finite query set \(X\), let \(\pi_1^{y,X}\) denote the endpoint posterior
over clean functions \(u_1\), and let \(q_t^X(du_t\mid u_1)\) denote the
posterior re-bridging kernel. The annealed posterior marginal at time \(t\) is
defined by
\begin{equation}
    \pi_t^{y,X}(du_t)
    =
    \int
    q_t^X(du_t\mid u_1)\,
    \pi_1^{y,X}(du_1).
    \label{eq:appendix_annealed_marginal}
\end{equation}
For posterior sampling, we use the residual-noise-free bridge with \(s(t)=1-t\),
so that \(q_1^X(du\mid u_1)=\delta_{u_1}(du)\). The residual noise
\(\sigma_{\min}\) is used only to regularize the flow-matching training path.

\noindent\textbf{Proposition~\ref{prop:faps_rebridge}
(Annealing and re-bridging, restated).}
Assume \(u_{t_k}\sim \pi_{t_k}^{y,X}\). If
\[
    u_1\sim \pi_1^{y,X}(du_1\mid u_{t_k}),
\]
and then
\[
    u_{t_{k+1}}\sim q_{t_{k+1}}^X(du_{t_{k+1}}\mid u_1),
\]
then \(u_{t_{k+1}}\sim \pi_{t_{k+1}}^{y,X}\).

\begin{proof}
Since we work on a finite query set \(X\), the bridge kernel admits a density,
which we denote by \(q_t^X(u_t\mid u_1)\). Define the marginal density of
\(u_{t_k}\) under the annealed posterior by
\begin{equation}
    m_{t_k}^X(u_{t_k})
    =
    \int
    q_{t_k}^X(u_{t_k}\mid \tilde u_1)\,
    \pi_1^{y,X}(d\tilde u_1).
    \label{eq:appendix_marginal_density}
\end{equation}
Then Eq.~\eqref{eq:appendix_annealed_marginal} can be written as
\[
    \pi_{t_k}^{y,X}(du_{t_k}) = m_{t_k}^X(u_{t_k})\,du_{t_k}.
\]
The joint law of the clean endpoint and bridge state induced by the mixture in
Eq.~\eqref{eq:appendix_annealed_marginal} is
\begin{equation}
    \pi_1^{y,X}(du_1)\,
    q_{t_k}^X(du_{t_k}\mid u_1).
    \label{eq:joint_law_appendix}
\end{equation}
Therefore, by Bayes' rule, the conditional law of the clean endpoint given the
bridge state is
\begin{equation}
    \pi_1^{y,X}(du_1\mid u_{t_k})
    =
    \frac{
        q_{t_k}^X(u_{t_k}\mid u_1)\,
        \pi_1^{y,X}(du_1)
    }{
        m_{t_k}^X(u_{t_k})
    }.
    \label{eq:clean_conditional_appendix}
\end{equation}
Multiplying both sides by the marginal law
\(\pi_{t_k}^{y,X}(du_{t_k})=m_{t_k}^X(u_{t_k})\,du_{t_k}\), we recover the joint
law:
\begin{equation}
    \pi_1^{y,X}(du_1\mid u_{t_k})\,
    \pi_{t_k}^{y,X}(du_{t_k})
    =
    \pi_1^{y,X}(du_1)\,
    q_{t_k}^X(du_{t_k}\mid u_1).
    \label{eq:conditional_joint_identity}
\end{equation}

Now let \(B\) be any measurable set. The transition described in the proposition
first samples \(u_1\) from the conditional law
\(\pi_1^{y,X}(du_1\mid u_{t_k})\), and then re-bridges to \(t_{k+1}\) using
\(q_{t_{k+1}}^X(\cdot\mid u_1)\). Hence
\begin{align}
    \mathbb P(u_{t_{k+1}}\in B)
    &=
    \iint
    q_{t_{k+1}}^X(B\mid u_1)\,
    \pi_1^{y,X}(du_1\mid u_{t_k})\,
    \pi_{t_k}^{y,X}(du_{t_k}) \\
    &=
    \iint
    q_{t_{k+1}}^X(B\mid u_1)\,
    \pi_1^{y,X}(du_1)\,
    q_{t_k}^X(du_{t_k}\mid u_1) \\
    &=
    \int
    q_{t_{k+1}}^X(B\mid u_1)\,
    \pi_1^{y,X}(du_1) \\
    &=
    \pi_{t_{k+1}}^{y,X}(B),
\end{align}
where the second equality uses
Eq.~\eqref{eq:conditional_joint_identity}, the third equality uses
\(\int q_{t_k}^X(du_{t_k}\mid u_1)=1\), and the final equality follows from
the definition of the annealed marginal in
Eq.~\eqref{eq:appendix_annealed_marginal}. Since this holds for every
measurable set \(B\), we conclude that
\(u_{t_{k+1}}\sim \pi_{t_{k+1}}^{y,X}\).
\end{proof}

\clearpage
\subsection{Flow Annealing Posterior Sampling Algorithm}
\label{app:faps_algorithm}

\begin{algorithm} 
\caption{Flow Annealing Posterior Sampling (FAPS)}
\label{alg:faps}
\begin{algorithmic}[1]
\Require Observation \(y\), observation operator \(\mathcal A\), pretrained OFM velocity \(v_\theta\), query set \(X\), reference covariance \(\Sigma_0^X\), covariance preconditioner \(C_X\), annealing schedule \(0=t_0<\cdots<t_K=1\), bridge scale \(s(t)\),  constants $\lambda_{\min}, \lambda_{\mathrm{scale}}$;  Langevin steps \(L\), step size \(\eta\)
\State Initialize particles \(u_{t_0}^{(i)} = s(t_0)\xi_0^{(i)}\), \(\xi_0^{(i)}\sim\mathcal N(0,\Sigma_0^X)\), for \(i=1,\ldots,N\)
\For{\(k=0,\ldots,K-1\)}
    \State Compute endpoint anchors by OFM transport:
    \[
        \hat u_k^{(i)}
        =
        \Phi^\theta_{t_k\to 1}(u_{t_k}^{(i)}),
        \qquad
        \frac{d u_t}{dt}=v_\theta(t,u_t).
    \]
    \State Set \(u_1^{(i,0)}\leftarrow \hat u_k^{(i)}\) and
    \[
        \lambda_k=\max\big(\lambda_{\min},
        \lambda_{\mathrm{scale}}(1-t_k)\big).
    \]
    \For{\(\ell=0,\ldots,L-1\)}
        \State Compute likelihood gradient:
        \[
            g_{\mathrm{lik}}^{(i,\ell)}
            =
            \nabla_{u_1}\log p(y\mid u_1^{(i,\ell)})
            =
            \frac{1}{\sigma_y^2}
            J_{\mathcal A}(u_1^{(i,\ell)})^\ast
            \big(y-\mathcal A(u_1^{(i,\ell)})\big).
        \]
        \State Compute local posterior drift:
        \[
            g^{(i,\ell)}
            =
            -\frac{u_1^{(i,\ell)}-\hat u_k^{(i)}}{\lambda_k^2}
            +
            C_X g_{\mathrm{lik}}^{(i,\ell)}.
        \]
        \State Sample \(\zeta^{(i,\ell)}\sim\mathcal N(0,C_X)\).
        \State Update by covariance-preconditioned Langevin dynamics:
        \[
            u_1^{(i,\ell+1)}
            =
            u_1^{(i,\ell)}
            +
            \eta g^{(i,\ell)}
            +
            \sqrt{2\eta}\,\zeta^{(i,\ell)}.
        \]
    \EndFor
    \State Set \(u_k^{(i),\mathrm{clean}}\leftarrow u_1^{(i,L)}\).
    \If{\(k<K-1\)}
        \State Re-bridge to the next annealing level:
        \[
            u_{t_{k+1}}^{(i)}
            =
            t_{k+1}u_k^{(i),\mathrm{clean}}
            +
            s(t_{k+1})\xi_{k+1}^{(i)},
            \qquad
            \xi_{k+1}^{(i)}\sim\mathcal N(0,\Sigma_0^X).
        \]
    \Else
        \State Set \(u_{t_K}^{(i)}\leftarrow u_k^{(i),\mathrm{clean}}\).
    \EndIf
\EndFor
\State \Return \(\{u_{t_K}^{(i)}\}_{i=1}^N\) as posterior samples from \(p(u_1\mid y)\).
\end{algorithmic}
\end{algorithm}

\clearpage

\subsection{Error propagation of practical FAPS}
\label{app:faps_error_convergence}

Proposition~\ref{prop:faps_rebridge} shows that the ideal FAPS transition is
exact when the clean conditional \(\pi_1^{y,X}(du_1\mid u_{t_k})\) can be
sampled exactly. In practice, this conditional is replaced by an approximate
local correction around the OFM endpoint anchor, followed by finite-step
Langevin dynamics and re-bridging. The result below should be interpreted as an
error-propagation statement rather than a standalone convergence theorem: it
shows how local one-step transition errors accumulate across annealing levels.

For \(k=0,\ldots,K-1\), define the ideal transition kernel
\begin{equation}
    K_k(u_{t_k},du_{t_{k+1}})
    =
    \int
    q_{t_{k+1}}^X(du_{t_{k+1}}\mid u_1)\,
    \pi_1^{y,X}(du_1\mid u_{t_k}).
    \label{eq:ideal_kernel_convergence}
\end{equation}
By Proposition~\ref{prop:faps_rebridge},
\begin{equation}
    \pi_{t_k}^{y,X}K_k=\pi_{t_{k+1}}^{y,X}.
\end{equation}
Let \(\widehat K_k\) denote the practical FAPS transition kernel, including
OFM endpoint transport, local Gaussian correction, finite-step Langevin
dynamics, and re-bridging. Let \(\widehat\pi_{t_k}^{y,X}\) denote the law of
the practical particles at level \(t_k\).

\begin{lemma}[Error accumulation under approximate transitions]
\label{lemma:faps_error_convergence}
Assume that, for each \(k\),
\begin{equation}
    \sup_{u_{t_k}}
    d_{\mathrm{TV}}
    \left(
        \widehat K_k(u_{t_k},\cdot),
        K_k(u_{t_k},\cdot)
    \right)
    \leq
    \varepsilon_k .
    \label{eq:faps_step_error}
\end{equation}
Then
\begin{equation}
    d_{\mathrm{TV}}
    \left(
        \widehat\pi_{t_K}^{y,X},
        \pi_{t_K}^{y,X}
    \right)
    \leq
    d_{\mathrm{TV}}
    \left(
        \widehat\pi_{t_0}^{y,X},
        \pi_{t_0}^{y,X}
    \right)
    +
    \sum_{k=0}^{K-1}\varepsilon_k .
    \label{eq:faps_error_accumulation}
\end{equation}
\end{lemma}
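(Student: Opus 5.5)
The plan is the standard telescoping argument for compositions of Markov kernels, with Proposition~\ref{prop:faps_rebridge} supplying invariance of the ideal chain. By construction the practical particles satisfy $\widehat\pi_{t_{k+1}}^{y,X}=\widehat\pi_{t_k}^{y,X}\widehat K_k$. By Proposition~\ref{prop:faps_rebridge}, the ideal marginals satisfy $\pi_{t_{k+1}}^{y,X}=\pi_{t_k}^{y,X}K_k$. The goal is a one-step recursion
\[
d_{\mathrm{TV}}\big(\widehat\pi_{t_{k+1}}^{y,X},\pi_{t_{k+1}}^{y,X}\big)\le d_{\mathrm{TV}}\big(\widehat\pi_{t_k}^{y,X},\pi_{t_k}^{y,X}\big)+\varepsilon_k ,
\]
which I then iterate from $k=0$ to $K-1$ to obtain Eq.~\eqref{eq:faps_error_accumulation}.

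To prove the recursion, insert the intermediate measure $\widehat\pi_{t_k}^{y,X}K_k$ and apply the triangle inequality:
\[
d_{\mathrm{TV}}\big(\widehat\pi_{t_k}\widehat K_k,\pi_{t_k}K_k\big)\le d_{\mathrm{TV}}\big(\widehat\pi_{t_k}\widehat K_k,\widehat\pi_{t_k}K_k\big)+d_{\mathrm{TV}}\big(\widehat\pi_{t_k}K_k,\pi_{t_k}K_k\big).
\]

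For the first term, I use the variational form $d_{\mathrm{TV}}(\mu,\nu)=\sup_{B}|\mu(B)-\nu(B)|$. For any measurable $B$,
\[
|\widehat\pi_{t_k}\widehat K_k(B)-\widehat\pi_{t_k}K_k(B)|\le\int|\widehat K_k(u,B)-K_k(u,B)|\,\widehat\pi_{t_k}(du)\le\varepsilon_k ,
\]
where the last step uses the uniform bound Eq.~\eqref{eq:faps_step_error}. Taking the supremum over $B$ gives a first term of at most $\varepsilon_k$.

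For the second term, I use the contraction property of Markov kernels in total variation, $d_{\mathrm{TV}}(\mu K,\nu K)\le d_{\mathrm{TV}}(\mu,\nu)$. To prove it, write $\mu-\nu=(\mu-\nu)^+-(\mu-\nu)^-$ as its Jordan decomposition; the two parts have equal mass, namely $d_{\mathrm{TV}}(\mu,\nu)$. Then
\[
(\mu K-\nu K)(B)=\int K(u,B)\,d(\mu-\nu)^+ -\int K(u,B)\,d(\mu-\nu)^- .
\]
Since $K(u,B)\in[0,1]$, each integral lies between $0$ and the common mass $d_{\mathrm{TV}}(\mu,\nu)$, so their difference is bounded in absolute value by $d_{\mathrm{TV}}(\mu,\nu)$.

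Combining the two bounds gives the recursion, and unrolling it over the $K$ steps yields the lemma.

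The only points needing care are measurability, so that $\widehat K_k$ and $K_k$ are genuine Markov kernels, and the convention for $d_{\mathrm{TV}}$ (supremum over sets versus half the $L^1$ norm). Both conventions make the argument go through identically as long as the same one is used in the hypothesis and the conclusion. I expect the contraction step to be the main substantive ingredient, and it is routine.
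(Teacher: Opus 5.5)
Your proposal is correct and matches the paper's proof: it inserts the same intermediate measure $\widehat\pi_{t_k}^{y,X}K_k$, bounds the first term by $\varepsilon_k$ using the uniform one-step hypothesis and the second by total-variation contraction of $K_k$, then iterates over $k$. The only difference is that you prove the setwise integration bound and the contraction property (via the Jordan decomposition), which the paper takes as standard.
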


\begin{proof}
Using the triangle inequality and contraction of Markov kernels in total
variation,
\begin{align}
    d_{\mathrm{TV}}
    \left(
        \widehat\pi_{t_{k+1}}^{y,X},
        \pi_{t_{k+1}}^{y,X}
    \right)
    &=
    d_{\mathrm{TV}}
    \left(
        \widehat\pi_{t_k}^{y,X}\widehat K_k,
        \pi_{t_k}^{y,X}K_k
    \right) \nonumber \\
    &\leq
    d_{\mathrm{TV}}
    \left(
        \widehat\pi_{t_k}^{y,X}\widehat K_k,
        \widehat\pi_{t_k}^{y,X}K_k
    \right)
    +
    d_{\mathrm{TV}}
    \left(
        \widehat\pi_{t_k}^{y,X}K_k,
        \pi_{t_k}^{y,X}K_k
    \right) \nonumber \\
    &\leq
    \varepsilon_k
    +
    d_{\mathrm{TV}}
    \left(
        \widehat\pi_{t_k}^{y,X},
        \pi_{t_k}^{y,X}
    \right).
\end{align}
Iterating over \(k=0,\ldots,K-1\) gives
Eq.~\eqref{eq:faps_error_accumulation}.
\end{proof}

Lemma~\ref{lemma:faps_error_convergence} does not by itself prove that the
practical transition is consistent; rather, it shows that if the practical
transition approximates the ideal transition at each annealing level, then the
global sampling error grows at most additively with the local transition errors.
The one-step error \(\varepsilon_k\) contains several sources of approximation:
the learned OFM velocity-field error, numerical ODE discretization error in the
endpoint transport, the local Gaussian approximation around the endpoint anchor,
finite-step Langevin discretization and mixing error, covariance-estimation
error, and any approximation in the re-bridging kernel. Under additional regularity and local-consistency assumptions, one may expect a
decomposition of the form
\begin{equation}
    \varepsilon_k
    \lesssim
    C_{\mathrm{loc}}\Delta_k^{1+\alpha}
    +
    C_{\mathrm{flow}}e_{\theta}
    +
    C_{\mathrm{ODE}}h_{\mathrm{ODE}}^{p}
    +
    a_k(\eta,L)
    +
    e_{\mathrm{cov}},
    \label{eq:faps_step_error_decomposition}
\end{equation}
where \(\Delta_k=t_{k+1}-t_k\), \(e_{\theta}\) denotes the learned velocity-field
approximation error, \(h_{\mathrm{ODE}}\) is the ODE solver step size, \(p\) is
the order of the ODE solver, \(a_k(\eta,L)\) denotes the finite-step Langevin
error, and \(e_{\mathrm{cov}}\) denotes the covariance-preconditioner estimation
error. Rather than serving as an explicit theorem, this decomposition  provides a clear conceptual roadmap, identifying the precise approximation terms 
that must be controlled to construct a full non-asymptotic convergence theory .

If these local errors vanish ($\sum_{k=0}^{K-1}\varepsilon_k \to 0$) as the annealing mesh, prior transport, and Langevin parameters are refined, Eq.~\eqref{eq:faps_error_accumulation} guarantees convergence to the exact target posterior. While proving formal non-asymptotic bounds for neural-operator flow priors and nonlinear PDEs remains an open direction, Lemma~\ref{lemma:faps_error_convergence} establishes a stability guarantee that is validated empirically through our posterior metrics and ablation studies.

\section{Experimental Setup}
\label{sec:experimental_setup}

We evaluate FAPS on two classes of tasks: direct functional (stochastic process) regression and PDE inverse problems. For functional regression, we use controlled Gaussian-process benchmarks and high-dimensional non-Gaussian fields following~\citep{shi_stochastic_2026,shi_mesh-informed_2025}. For PDE inverse problems, we use the FunDPS benchmark datasets~\citep{yao_guided_2026}. All neural-process baselines are implemented following~\citep{abu_hamad_flow_2026}. Dataset statistics are summarized in Table~\ref{tab:app_dataset}; observation settings, noise levels (variance), posterior sample counts, and query resolutions are reported in Table~\ref{tab:app_obs}. All runtimes reported in the following tables are measured from a single run on one NVIDIA RTX A6000 Ada GPU with 48 GB memory. For all cases except the global climate, the reference Gaussian measure \(\gamma\) used for training the OFM prior is specified by a Matérn kernel with smoothness parameter \(\zeta=0.5\) and length scale \(l=0.01\) (for the global climate case, \(l=0.05\)).

\subsection{Datasets for Functional Regression}
\label{app:functional_regression_datasets}

\textbf{Matérn-kernel GP.}
We use a Matérn Gaussian process on the domain \([0,1]\) with length scale
\(l=0.3\) and smoothness parameter \(\zeta=1.5\). We generate \(20{,}000\)
training samples at a fixed resolution of \(128\). Both the OFM prior and the
neural-process baselines are trained at this low resolution. We evaluate on
\(100\) test functions at query resolutions \(128\) and \(512\). For each test
case, we randomly select \(7\) observed locations and add Gaussian noise with
variance \(10^{-2}\).

\textbf{Nonstationary Gibbs-kernel GP.}
For the Gibbs kernel, we use an input-dependent length scale
\[
    \ell(x)=\ell_0+\ell_1 x,\qquad x\in[0,1],
\]
which induces the covariance
\[
    k(x,x')
    =
    \sigma^2
    \sqrt{
    \frac{2\ell(x)\ell(x')}
    {\ell(x)^2+\ell(x')^2}
    }
    \exp\left(
    -\frac{(x-x')^2}
    {\ell(x)^2+\ell(x')^2}
    \right).
\]
We set \(\ell_0=0.05\), \(\ell_1=0.25\), and \(\sigma=1.0\). All other
settings are identical to the Matérn-kernel GP experiment.

\textbf{Navier--Stokes.}
This dataset consists of solutions to the two-dimensional Navier--Stokes
equations on a torus at resolution \(64\times64\)~\citep{li_fourier_2021}. For
each test field, we observe \(64\) randomly selected spatial locations and add
Gaussian noise with variance \(10^{-2}\).

\textbf{Black hole.}
We use the black-hole imaging dataset from~\citep{shi_stochastic_2026}. The
training set contains \(11{,}600\) images at resolution \(64\times64\), after
rotation-based data augmentation. For each test image, we observe \(256\)
randomly selected pixels and add Gaussian noise with variance \(10^{-3}\).

\textbf{Global climate.}
We use the real-world global climate dataset from~\citet{dupont_generative_2022},
which contains global temperature measurements over the past \(40\) years. Each
sample is a function defined on a \(46\times90\) latitude--longitude grid.
Following~\citet{dupont_generative_2022}, we convert latitude--longitude
coordinates to Euclidean coordinates in \(\mathbb R^3\) before passing them to
the models. The dataset contains \(9{,}676\) training samples. For each test
case, we observe \(128\) randomly selected spatial locations and add Gaussian
noise with variance \(10^{-3}\).

\subsection{Datasets for PDE Inverse Problems}
\label{app:pde_inverse_datasets}

We consider four PDE inverse-problem benchmarks from DiffusionPDE/FunDPS~\citep{huang_diffusionpde_2024,yao_guided_2026}: Darcy flow, Helmholtz, non-bounded Navier--Stokes, and the Poisson equation. We use the normalized datasets provided by the authors and refer readers to~\citep{yao_guided_2026} for dataset details. Each dataset is defined on a \(128\times128\) grid and consists of paired input and solution fields. Each benchmark contains \(50{,}000\) training samples. To keep the PDE descriptions readable, we use \(u\) to denote the unknown input field in the PDE-specific equations below. In the unified posterior-sampling notation used in the main text, this same unknown field is denoted by \(u_1\). Thus, throughout this subsection, \(u\) and \(u_1\) refer to the same input field, while \(w=\mathcal G(u)\) denotes the corresponding PDE solution field. This notation differs from the original benchmark descriptions, where the input and solution fields are often denoted by \(a(x)\) and \(u(x)\), respectively.

For each PDE inverse problem, the goal is to infer the unknown input field \(u\) from sparse observations of the solution field \(w\). The forward operator \(\mathcal G\) is approximated by a pretrained FNO surrogate \(\mathcal G_\phi\), which maps input fields to solution fields. The function-space flow-matching prior is trained on the distribution of input fields. At test time, we randomly select \(128\) solution-observation locations for each test case and add Gaussian noise with variance \(10^{-3}\). For consistency with the main posterior formulation, we write the observation model as
\[
    y = P_\Omega \mathcal G_\phi(u_1) + \epsilon,
    \qquad
    \epsilon \sim \mathcal N(0,10^{-3}I),
\]
where \(u_1\equiv u\) is the unknown input field and \(P_\Omega\) denotes the sparse observation operator. We emphasize that, unlike prior studies~\citep{yao_guided_2026,huang_diffusionpde_2024, lin_decoupled_2026}, which evaluate on noise-free point observations, our setting explicitly corrupts the observed solution values with Gaussian noise. All methods are given only these noisy observations. The dataset descriptions below restate the benchmark definitions from~\citep{yao_guided_2026,huang_diffusionpde_2024}, with notation adapted to match our unified functional-regression and inverse-problem formulation.

\paragraph{Darcy flow.}
We consider the Darcy flow equation on the unit square,
\begin{equation}
    -\nabla \cdot \big(u(x)\nabla w(x)\big) = f(x),
    \qquad x \in (0,1)^2,
\end{equation}
with unit forcing \(f(x)=1\) and zero boundary conditions. The coefficient field is sampled as $ u \sim h_{\#}\mathcal{N}\big(0,(-\Delta+9I)^{-2}\big),$
where \(h:\mathbb{R}\to\mathbb{R}\) thresholds the Gaussian field by setting \(h(z)=12\) if \(z>0\) and \(h(z)=3\) otherwise.

\paragraph{Poisson equation.}
We consider the Poisson equation on the unit square,
\begin{equation}
    \nabla^2 w(x) = u(x),
    \qquad x\in(0,1)^2,
\end{equation}
with homogeneous Dirichlet boundary conditions \(w|_{\partial\Omega}=0\). The source field is sampled from a Gaussian random field,
$u \sim \mathcal{N}\big(0,(-\Delta+9I)^{-2}\big).$

\paragraph{Helmholtz equation.}
We consider the Helmholtz equation on the unit square,
\begin{equation}
    \nabla^2 w(x) + k^2 w(x) = u(x),
    \qquad x \in (0,1)^2,
\end{equation}
with \(k=1\) and homogeneous Dirichlet boundary conditions \(w|_{\partial\Omega}=0\). The coefficient field \(u(x)\) is sampled from a Gaussian random field following~\citep{li_fourier_2021}.

\paragraph{Navier--Stokes equations.}
We consider the two-dimensional incompressible Navier--Stokes equations in
vorticity form on the unit square. Let \(v(x,t)\) denote the velocity field and
let \(\zeta(x,t)=\nabla\times v(x,t)\) denote the vorticity. The dynamics are
given by
\begin{align}
    \partial_t \zeta(x,t) + v(x,t)\cdot \nabla \zeta(x,t)
    &= \nu \Delta \zeta(x,t) + f(x),
    \qquad x\in(0,1)^2,\; t\in(0,T], \\
    \nabla \cdot v(x,t) &= 0,
    \qquad x\in(0,1)^2,\; t\in[0,T], \\
    v(x,0) &= u(x),
    \qquad x\in(0,1)^2 .
\end{align}
Here \(\nu=10^{-3}\) is the viscosity and \(T=1\), $w(x) = v(x, T)$. The initial vorticity field
is sampled as $
    u \sim \mathcal{N}\!\left(
    0,\, 7^{3/2}(-\Delta+49I)^{-5/2}
    \right),
$
and the forcing term is fixed as
$
    f(x)=\frac{1}{10}\left[
    \sin\!\left(2\pi(x_1+x_2)\right)
    + \cos\!\left(2\pi(x_1+x_2)\right)
    \right].
$
The PDE is solved using a pseudo-spectral method following~\citep{li_fourier_2021}.

\begin{table}[ht]
\renewcommand{\arraystretch}{0.9}

  \centering
  \footnotesize 

\caption{Summary of datasets used for functional regression and PDE inverse experiments.}
  \label{tab:benchmarks}
  \begin{tabular}{c c c c c c}
    \toprule
    Problem & Datasets & Training Resolution & 
    Training Samples
    & Test Samples\\
    \midrule
    \multirow{3}{*}{Functional Regression}
      & Matérn-Kernel GP & 128 & 20,000 & 100 \\
      & Gibbs-Kernel GP    & 256    & 20,000     & 100 \\
      & Navier-Stokes (Regression)   & $64\times64$      & 30,000    & 100  \\
      & Black Hole    & $64\times64$        & 11,600    & 100  \\
      & Global Climate    & (manifold) mesh=4,140    & 9,676    & 100  \\
    \midrule
    \multirow{3}{*}{PDE Inverse}
      & Darcy Flow & $128\times128$ & 50,000 & 100\\
      & Helmholtz Equation         & $128\times128$       &50,000   & 100 \\
      & Navier-Stokes (PDE)         & $128\times128$        & 50,000  & 100  \\
      & Poisson Equation         & $128\times128$        & 50,000  & 100  \\
    \bottomrule
  \end{tabular}
  \label{tab:app_dataset}
\end{table}

\begin{table}[ht]

  \centering

\caption{Noise levels (variance), observation counts, posterior samples per test case, and query resolutions used in each benchmark.}
  \label{tab:benchmarks}
  \scriptsize
  \begin{tabular}{c c c c c c}
    \toprule
    Problem & Datasets & 
    Noise Level & Num. of Obs
    & Post. Samples  & Query Res.\\
    \midrule
    \multirow{5}{*}{Func. Regression}
      & Matérn-Kernel GP & 1e-2 & 7 & 128 & 128~/~512~/~1024 \\
      & Gibbs-Kernel GP    & 1e-2   & 7     & 128 & 128~/~512~/~1024 \\
      & Navier-Stokes (Regression)    & 1e-2    & 64    & 32  & $64\times64$\\
      & Black Hole    & 1e-3       & 256    & 32  & $64\times64$\\
      & Global Climate    & 1e-3 & 128    & 32    & 4,140  \\
    \midrule
    \multirow{4}{*}{PDE Inverse}
      & Darcy Flow & 1e-3 & 128 & 32 & $128\times 128$ / $160\times 160$\\
      & Helmholtz Equation         & 1e-3       &128  & 32 &$128\times 128$ \\
      & Navier-Stokes (PDE)         & 1e-3        & 128 & 32 &$128\times 128$  \\
      & Poisson Equation         & 1e-3       &128  & 32 &$128\times 128$  \\
    \bottomrule
  \end{tabular}
  \label{tab:app_obs}
\end{table}

\subsection{Evaluation Metrics}
\label{app:evaluation_metrics}

For one-dimensional GP regression, exact reference posterior samples are
available. We therefore evaluate posterior sample quality using Sliced
Wasserstein Distance (SWD) and Maximum Mean Discrepancy (MMD), both of which
directly compare generated posterior samples with reference posterior samples.
Lower values indicate better agreement with the reference posterior.

For high-dimensional functional regression and noisy PDE inverse problems, exact
posterior distributions are unavailable. Following~\citet{zheng_blade_2026}, we
therefore report both probabilistic and reconstruction metrics. Continuous
Ranked Probability Score (CRPS) evaluates the quality of the predictive
posterior, with lower values indicating better performance. Spread--Skill Ratio
(SSR) measures ensemble calibration, with an ideal value close to \(1\). Peak
Signal-to-Noise Ratio (PSNR) and Structural Similarity Index (SSIM) measure
reconstruction fidelity, with higher values indicating better reconstructions.
Relative \(L^2\) error measures normalized reconstruction error.

For PDE inverse problems, we emphasize that reconstruction metrics alone do not fully
characterize posterior quality in sparse and noisy inverse problems, where multiple input
fields may be consistent with the same noisy observations. The metric definitions below follow those used in
\citep{shi_mesh-informed_2025,zheng_blade_2026}, with notation adapted to our
functional-regression and inverse-problem setting. The reported metrics are averaged over all test cases.

\paragraph{Sliced Wasserstein distance.}
We measure the discrepancy between generated samples \(P\) and reference
posterior samples \(Q\) using the sliced Wasserstein distance:
\[
    \mathrm{SWD}_p(P,Q)
    \approx
    \left(
    \frac{1}{L}\sum_{\ell=1}^{L}
    W_p^p\big(
        \langle P,\theta_\ell\rangle,
        \langle Q,\theta_\ell\rangle
    )
    \right)^{1/p},
\]
where \(\theta_\ell\) denotes a random projection direction,
\(\langle P,\theta_\ell\rangle\) and \(\langle Q,\theta_\ell\rangle\) are the
corresponding empirical one-dimensional projected distributions, and \(W_p\) is
the one-dimensional Wasserstein distance of order \(p\). In our experiments, we
set \(p=2\) and report the averaged SWD following the implementation of
\citet{shi_mesh-informed_2025}.

\paragraph{Maximum mean discrepancy.}
We evaluate the discrepancy between generated samples and reference posterior samples using the maximum mean discrepancy (MMD). Given generated samples \(P=\{h_{1}^i\}_{i=1}^{n}\) and reference samples \(Q=\{h_{2}^j\}_{j=1}^{m}\), the squared MMD is estimated as
\[
    \mathrm{MMD}^2(P,Q)
    =
    \frac{1}{n(n-1)}\sum_{i\neq i'} k(h_{1}^i,h_{1}^{i'})
    +
    \frac{1}{m(m-1)}\sum_{j\neq j'} k(h_{2}^j,h_{2}^{j'})
    -
    \frac{2}{nm}\sum_{i=1}^{n}\sum_{j=1}^{m} k(h_1^i,h_2^j),
\]
where \(k(\cdot,\cdot)\) is a positive-definite kernel. In our experiments, we use the Gaussian RBF kernel
\[
    k(h_1,h_2)=\exp\left(-\frac{\|h_1-h_2\|_2^2}{2\sigma^2}\right),
\]
with bandwidth \(\sigma\). Lower MMD values indicate that the generated samples better match the reference posterior distribution.

\paragraph{Relative \(L^2\) error.}
Given one prediction \(h\) and the ground truth \(h^\ast\), we report the relative \(L^2\) error:
\[
    \mathrm{Rel}\text{-}L^2(h,h^\ast)
    =
    \frac{\|h-h^\ast\|_2}{\|h^\ast\|_2}.
\]

\paragraph{Continuous ranked probability score (CRPS).}
We use the continuous ranked probability score (CRPS) to assess the quality of the posterior predictive distribution~\citep{gneiting_strictly_2007}. For a predictive random variable \(h\) and a ground-truth observation \(h^\ast\), CRPS is defined as
\[
    \mathrm{CRPS}
    =
    \mathbb{E}|h-h^\ast|
    -
    \frac{1}{2}\mathbb{E}|h-h'|,
\]
where \(h\) and \(h'\) are independent samples from the predictive distribution, and \(h^\ast\) denotes the observed ground truth. The first term measures the discrepancy between posterior samples and the observation, while the second term rewards appropriate ensemble spread. As a proper scoring rule, CRPS is minimized in expectation when the predictive distribution matches the data-generating distribution. Lower CRPS therefore indicates posterior samples that are more accurate and better calibrated.

\paragraph{Spread--skill ratio (SSR).}
We use the spread--skill ratio (SSR) to assess the calibration of posterior samples~\citep{fortin_why_2014}. 
Given \(N\) test cases with ground truth \(h_i^\ast\) and ensemble predictions \(\{h_{i,j}\}_{j=1}^{J}\), let $
    \bar{h}_i = \frac{1}{J}\sum_{j=1}^{J} h_{i,j}.
$
The SSR is defined as
\[
    \mathrm{SSR}
    =
    \sqrt{\frac{\mathrm{spread}^2}{\mathrm{skill}^2}},
\]
where
\[
    \mathrm{spread}^2
    =
    \frac{1}{N}\sum_{i=1}^{N}
    \frac{1}{J-1}\sum_{j=1}^{J}
    \|h_{i,j}-\bar{h}_i\|_2^2,
\]
and
\[
    \mathrm{skill}^2
    =
    \frac{1}{N}\sum_{i=1}^{N}
    \|\bar{h}_i-h_i^\ast\|_2^2 .
\]
An ideal calibrated ensemble has \(\mathrm{SSR}\approx 1\). Values below one indicate under-dispersion, or over-confidence, while values above one indicate over-dispersion, or overly conservative uncertainty estimates.

\paragraph{Peak signal-to-noise ratio (PSNR).}
We use the peak signal-to-noise ratio (PSNR) to evaluate reconstruction quality. Given a prediction \(h\) and the ground truth \(h^\ast\), PSNR is defined as
\[
    \mathrm{PSNR}(h,h^\ast)
    =
    20\log_{10}(\mathrm{MAX})
    -
    10\log_{10}\big(\mathrm{MSE}(h,h^\ast)\big),
\]
where \(\mathrm{MAX}\) denotes the maximum possible signal value, or equivalently the prescribed data range for normalized fields. Higher PSNR indicates better reconstruction quality. Since the empirical data range varies across datasets, we use a fixed data range for consistency and simplicity: \(\mathrm{MAX}=1.0\) for functional regression tasks and \(\mathrm{MAX}=5.0\) for PDE tasks.

\paragraph{Structural similarity index measure (SSIM).}
We use the structural similarity index measure (SSIM) to evaluate perceptual similarity between a prediction \(h\) and ground truth \(h^\ast\)~\citep{wang_image_2004}. SSIM is defined as
\[
    \mathrm{SSIM}(h,h^\ast)
    =
    \frac{(2\mu_h\mu_{h^\ast}+C_1)(2\sigma_{h h^\ast}+C_2)}
    {(\mu_h^2+\mu_{h^\ast}^2+C_1)(\sigma_h^2+\sigma_{h^\ast}^2+C_2)},
\]
where \(\mu_h\) and \(\mu_{h^\ast}\) are the mean intensities, \(\sigma_h^2\) and \(\sigma_{h^\ast}^2\) are the variances, \(\sigma_{h h^\ast}\) is the covariance, and \(C_1,C_2\) are stabilization constants (with slight abuse of notation). Higher SSIM indicates stronger structural similarity.

\subsection{Experimental Configurations}
\label{app:experimental_configurations}

All methods are evaluated on shared test sets with the same random observation masks and noise realizations across baselines. FAPS uses pretrained OFM priors and performs posterior inference at test time without retraining for new observation sets. The baseline ``OFM'' refers to the OFM posterior sampling algorithm~\citep{shi_stochastic_2026}, which uses the same pretrained OFM prior but does not include the FAPS annealed correction and re-bridging procedure.

For GP regression, the OFM prior is trained for \(500\) epochs. For PDE inverse problems, the input-field priors are trained for \(100\) epochs using either an FNO or UNet backbone. The FNO prior corresponds to the function-space OFM setting, while the UNet prior transports finite-dimensional white noise to the data distribution. This UNet variant demonstrates the backward compatibility of FAPS: beyond function-space OFM priors, FAPS can also be directly applied to standard finite-dimensional flow-matching priors for inverse problems. The PDE forward operators are approximated by pretrained FNO surrogates, which are kept frozen throughout posterior sampling.

Unless otherwise specified, FAPS draws \(32\) posterior samples per test case, and all reported metrics are averaged over the corresponding test set. 
For functional regression, we use \(40\) annealing steps, \(20\) ODE steps for endpoint transport, \(50\) Langevin correction steps per annealing level, a Langevin step size of \(10^{-3}\), and a low-rank covariance preconditioner with rank \(32\). 
Additional prior architectures and posterior-sampling hyperparameters are provided in Tables~\ref{tab:ofm_prior_architectures} and~\ref{tab:posterior_sampling_hyperparams}.

For PDE inverse problems, we report two FAPS variants. 
FAPS-FNO is the standard function-space setting, where the prior is parameterized by an FNO and the reference distribution \(\mathcal{N}(0,\Sigma_0^{X})\) in Algorithm~\ref{alg:faps} is the finite-dimensional marginal of the Gaussian reference process on the query set \(X\). 
FAPS-UNet uses a standard finite-dimensional flow-matching prior, where a UNet transports white noise to the data distribution. 
Both variants use the same posterior-sampling procedure; they differ only in the prior backbone and the corresponding reference distribution. 
This demonstrates the backward compatibility of FAPS with finite-dimensional flow-matching priors. 
However, only FAPS-FNO naturally supports zero-shot super-resolution inverse problems, since the function-space prior can be evaluated on unseen query resolutions; see Appendix~\ref{app:zero_shot_super_resolution}. For PDE inverse experiments, we use \(20\) annealing steps, \(10\) ODE steps for endpoint transport, \(40\) Langevin correction steps per annealing level, a Langevin step size of \(4 \times 10^{-5}\), and a low-rank covariance preconditioner with rank \(32\). 
Additional PDE prior architectures and inverse-sampling hyperparameters are provided in Tables~\ref{tab:app_OFM_PDE} and~\ref{tab:posterior_sampling_inverse_hyperparams}.

\newpage
\begin{table}[ht]
\renewcommand{\arraystretch}{0.9}

  \centering
  \footnotesize 

    \caption{Architecture and model size of pretrained OFM priors for functional regression.}
  \label{tab:benchmarks}
  \begin{tabular}{c c c c c c}
    \toprule
     Func. Regression & Datasets & Architecture & 
    Modes/ Hidden channel/ Layers
    & Num. of Parameters \\
    \midrule
    \multirow{5}{*}{OFM prior}
      & Matérn-Kernel GP & 1D FNO & 32/256/4 & 5.25 M \\
      & Gibbs-Kernel GP    & 1D FNO    & 32/256/4    & 5.25 M \\
      & Navier-Stokes    & 2D FNO      & 24/128/4    & 20.6 M  \\
      & Black Hole    & 2D FNO        & 24/128/4   & 20.6 M  \\
      & Global Climate    & MINO   & /    & 21.4 M  \\
    \bottomrule
  \end{tabular}
    \label{tab:ofm_prior_architectures}
\end{table}

\begin{table}[ht]
\centering
\caption{FAPS hyperparameters for functional regression}
\label{tab:posterior_sampling_hyperparams}
\begin{tabular}{clc}
\toprule
\textbf{Algorithm} & \textbf{Items} & \textbf{Values} \\
\midrule
\multirow{9}{*}{\shortstack[l]{FAPS}}
& Posterior samples per case & 128 (GP)~/~32 (non-GP) \\
& Annealing steps & 40 \\
& ODE steps (anchor transport) & 20 \\
& $\lambda_{\min} / \lambda_{\mathrm{scale}}$ & 0.05/ 1 \\
& Langevin steps / level & 50 \\
& Langevin learning rate & \(10^{-3}\) (GP) / \(10^{-4}\) (non-GP)  \\
& Low-rank covariance rank & 32 \\
& Low-rank covariance samples & 256 \\
& Low-rank covariance ODE steps & 20 \\
\bottomrule
\end{tabular}
\end{table}

\begin{table}[ht]
\renewcommand{\arraystretch}{0.9}

  \centering
  \footnotesize 

\caption{Architecture and model size of OFM, UNet priors and PDE surrogates for inverse problems.}
  \label{tab:benchmarks}
  \begin{tabular}{c c c c c c}
    \toprule
     PDE Inverse & Datasets & Architecture & 
    Modes/ Hidden channel/ Layers
    & Num. of Parameters \\
    \midrule
    \multirow{4}{*}{OFM prior}
      & Darcy Flow & 2D FNO & 48/64/4 & 19.7 M \\
      & Helmholtz Equation   & 2D FNO     & 48/64/4 & 19.7 M\\
      & Navier Stokes (PDE)    & 2D FNO       & 48/64/4 & 19.7 M  \\
      & Poisson Equation    & 2D FNO         & 48/64/4 & 19.7 M  \\
    \midrule
    \multirow{4}{*}{UNet prior}
      & Darcy Flow & Diffusion UNet & - & 14.9 M \\
      & Helmholtz Equation   & Diffusion UNet     & - & 14.9 M\\
      & Navier Stokes (PDE)    & Diffusion UNet      & - & 14.9 M  \\
      & Poisson Equation    & Diffusion UNet         & - & 14.9 M  \\
    \midrule
    \multirow{4}{*}{PDE surrogate}
      & Darcy Flow & 2D FNO & 48/64/4 & 19.7 M \\
      & Helmholtz Equation   & 2D FNO     & 48/64/4 & 19.7 M\\
      & Navier Stokes (PDE)  & 2D FNO       & 48/64/4 & 19.7 M  \\
      & Poisson Equation     & 2D FNO         & 48/64/4 & 19.7 M  \\
      \bottomrule
    
  \end{tabular}
  \label{tab:app_OFM_PDE}
\end{table}

\begin{table}[h]
\centering
\caption{FAPS hyperparameters for noisy PDE invese problem}
\label{tab:posterior_sampling_inverse_hyperparams}
\begin{tabular}{clc}
\toprule
\textbf{Algorithm} & \textbf{Items} & \textbf{Values} \\
\midrule
\multirow{9}{*}{\shortstack[l]{FAPS}}
& Posterior samples per case & 32 \\
& Annealing steps & 20 \\
& ODE steps (anchor transport) & 10 \\
& $\lambda_{\min} / \lambda_{\mathrm{scale}}$ & 0.05/ 1 \\
& Langevin steps / level & 40 \\
& Langevin learning rate & \(4\times10^{-5}\) \\
& Low-rank covariance rank & 32 \\
& Low-rank covariance samples & 256 \\
& Low-rank covariance ODE steps & 20 \\
\bottomrule
\end{tabular}
\end{table}

\subsection{Baseline Implementation Details}
\label{app:baseline_implementation}

We describe the implementation details for the baselines used in our experiments. 
For neural-process baselines, we follow the standard FlowNP implementation~\citep{abu_hamad_flow_2026} available at \url{https://github.com/danrsm/flowNP}. For Global climate, we remove the comparison with OFM posterior sampling baseline, since combined with MINO prior, OFM posterior sampling is extremely slow and we also encountered numerically instability in this case. 
All methods are evaluated on shared test sets with identical observation masks and noise realizations.

For PDE inverse problems, we compare FAPS with DiffusionPDE, FunDPS, and DDIS. 
For FAPS, we consider two prior backbones: an FNO-based function-space prior, denoted FAPS-FNO, and a UNet-based finite-dimensional flow-matching prior, denoted FAPS-UNet. 
FAPS-FNO is our default function-space setting, while FAPS-UNet demonstrates the backward compatibility of FAPS with standard finite-dimensional flow-matching priors. 
Architectural details are provided in Table~\ref{tab:app_OFM_PDE}.

To ensure a fair comparison, we match prior backbones and model sizes across methods whenever possible. 
DiffusionPDE, DDIS, and FAPS-UNet use the same UNet backbone with the same model size for prior learning, while FunDPS and FAPS-FNO use the same FNO backbone with the same model size. 
For FunDPS and DiffusionPDE, we start from the official recommended guidance-strength parameters and make only minor adjustments for our observation setting. For DDIS, we choose official recommended parameters for annealing, prior and Langevin step (100, 5, 20).
The PDE forward operators used by FAPS and DDIS are approximated by pretrained FNO surrogates, which are kept fixed during posterior sampling.

For all methods, we exclude additional PDE residual guidance or PDE-specific training losses beyond the observation likelihood used for posterior sampling. 
We also do not apply the multi-resolution sampling strategy proposed by FunDPS, so that all baselines are compared under a consistent single-resolution inference setting For DDIS, we implement Algorithm~1, the DDIS-DAPS sampler, from~\citet{lin_decoupled_2026}. 
Although the DDIS appendix reports additional experimental settings and weighting coefficients, some of these details are not well explained in that paper and not fully specified for our benchmark configuration. 
Our preliminary attempts to incorporate these extra settings (e.g. RBF noise injection and include additional weights for Langevin steps) led to worse performance. 
Therefore, we report results using the default DDIS-DAPS sampler described in the main algorithm, together with the same pretrained FNO forward surrogate used by FAPS.

Table~\ref{tab:poisson_runtime} compares test-time computational cost on the Poisson inverse benchmark. 
Here, ``prior sampling steps'' denote the number of diffusion steps used by the DiffusionPDE prior. 
For FAPS and DDIS, the sampling configuration is reported as 
\((\text{annealing}, \text{prior transport/sampling}, \text{Langevin})\) steps. 
Each method draws \(32\) posterior samples per test case. 
FAPS requires substantially less computation: FAPS-FNO and FAPS-UNet take \(65.1\)s and \(64.9\)s per test case, respectively, compared with \(112.5\)s for DiffusionPDE, \(100.7\)s for FunDPS, and \(153.9\)s for DDIS. 
Using FAPS-UNet as the reference, this corresponds to a \(1.73\times\) speedup over DiffusionPDE, a \(1.55\times\) speedup over FunDPS, and a \(2.37\times\) speedup over DDIS under the same posterior-sample budget.

\begin{table}[h]
\centering

\caption{
Test-time computational cost on the Poisson inverse benchmark. 
Speedup is reported relative to FAPS-UNet. 
The sampling configuration denotes the number of annealing, prior transport/sampling, and Langevin correction steps, respectively.
}

\label{tab:poisson_runtime}

\begin{tabular}{lccc}
\toprule
Method 
& Prior backbone / size
& PDE surrogate / size
& Prior sampling steps \\
\midrule
DiffusionPDE  & UNet (14.9M) & --          & 1000 \\
FunDPS        & FNO (19.7M)  & --          & 1000 \\
DDIS          & UNet (14.9M) & FNO (19.7M) & -- \\
FAPS-FNO      & FNO (19.7M)  & FNO (19.7M) & -- \\
FAPS-UNet     & UNet (14.9M) & FNO (19.7M) & -- \\
\midrule
Method 
& Sampling config.
& Runtime (s/test case)
& Speedup \\
\midrule
DiffusionPDE  & --             & 112.5 & \(1.73\times\) \\
FunDPS        & --             & 100.7 & \(1.55\times\) \\
DDIS          & \((100,5,20)\) & 153.9 & \(2.37\times\) \\
FAPS-FNO      & \((20,10,40)\) & 65.1  & \(1.00\times\) \\
FAPS-UNet     & \((20,10,40)\) & 64.9  & \(1.00\times\) \\
\bottomrule
\end{tabular}
\end{table}

\clearpage
\section{Ablation and scaling study on Low-Rank Covariance Preconditioning}
\label{app:lowrank-cov-ablation}

We study the effect of the low-rank covariance preconditioner used in the FAPS Langevin correction step. In the masking regression experiment, the Langevin correction uses a preconditioned likelihood gradient of the form
\[
    \nabla_{u_1} \log p(y \mid {u_1})
    \quad \longmapsto \quad
    \widehat{\Sigma}_r \nabla_{u_1} \log p(y \mid u_1),
\]
where \(\widehat{\Sigma}_r\) is a rank-\(r\) covariance surrogate estimated from clean samples generated by the learned OFM prior. When \(r=0\), no covariance surrogate is used, and the preconditioner reduces to the identity matrix. This corresponds to the unpreconditioned Langevin baseline.

Table~\ref{tab:lowrank-cov-ablation} reports the posterior sample quality for different ranks of Matérn Gaussian process regression (resolution = 512). We compare FAPS posterior samples against samples from the exact Gaussian process posterior using KL divergence, sliced Wasserstein distance (SWD), and maximum mean discrepancy (MMD). All metrics are computed using 128 posterior samples.

\begin{table}[h]
\centering
\caption{Ablation of the low-rank covariance preconditioner. Rank \(0\) corresponds to the identity preconditioner. Lower is better for all metrics.}
\label{tab:lowrank-cov-ablation}
\begin{tabular}{c c c c c}
\hline
Rank & KL & KL / dim & SWD & MMD \\
\hline
0  & \(9.41 \cdot 10^{4}\) & \(1.84\cdot10^{2}\) & \(6.63\cdot10^{-1}\) & \(5.08\cdot10^{-1}\) \\
2  & \(2.53\cdot10^{4}\) & \(4.93\cdot10^{1}\) & \(3.45\cdot10^{-1}\) & \(3.07\cdot10^{-1}\) \\
4  & \(5.28\cdot10^{3}\) & \(1.03\cdot10^{1}\) & \(6.26\cdot10^{-2}\) & \(4.20\cdot10^{-2}\) \\
8  & \(9.03\cdot10^{2}\) & \(1.76\cdot10^{0}\) & \(4.70\cdot10^{-2}\) & \(3.98\cdot10^{-2}\) \\
16 & \(3.74\cdot10^{2}\) & \(7.30\cdot10^{-1}\) & \(4.85\cdot10^{-2}\) & \(4.21\cdot10^{-2}\) \\
32 & \(3.34\cdot10^{2}\) & \(6.53\cdot10^{-1}\) & \(3.56\cdot10^{-2}\) & \(1.72\cdot10^{-2}\) \\
64 & \(3.43\cdot10^{2}\) & \(6.71\cdot10^{-1}\) & \(4.50\cdot10^{-2}\) & \(3.02\cdot10^{-2}\) \\
\hline
\end{tabular}
\end{table}

\begin{figure*}[ht]
    \vspace*{-0.3cm}
    \centering
    \includegraphics[width=0.9\textwidth]{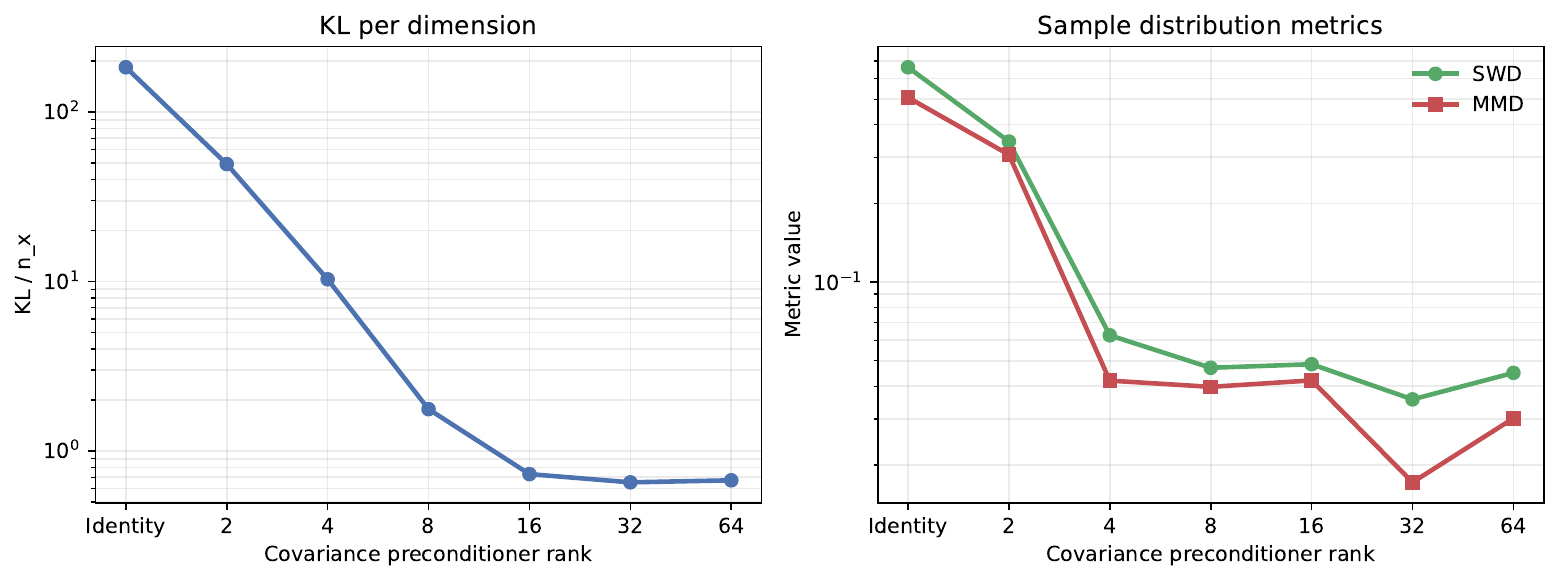}
    \caption{Ablation (rank=0) and scaling study of the low-rank covariance preconditioning}
    \label{fig:app_lowrank}
\end{figure*}

The results show that covariance preconditioning is crucial. The identity-preconditioned baseline (\(r=0\)) gives substantially worse posterior samples, with KL per dimension \(183.86\), SWD \(0.663\), and MMD \(0.508\). Introducing even a very low-rank covariance approximation improves all metrics significantly. For example, rank \(4\) reduces KL per dimension from \(183.86\) to \(10.30\), and rank \(8\) further reduces it to \(1.76\). Performance saturates after moderate rank. Rank \(32\) gives the best overall result in this experiment, achieving the lowest KL per dimension (\(0.653\)), SWD (\(0.0356\)), and MMD (\(0.0172\)). Increasing the rank to \(64\) does not further improve the result, suggesting that the dominant posterior geometry is already captured by a moderate-rank covariance surrogate. The runtime is nearly unchanged across ranks, around $40 \sim 41$ seconds in this experiment. Thus, the low-rank covariance preconditioner provides a large improvement in posterior quality with negligible additional sampling cost once the covariance surrogate has been estimated.

\section{Comparison with Existing Methods}
\label{app:comparison_existing_methods}

\paragraph{Comparison with OFM posterior sampling and Neural Processes.}
FAPS differs from neural-process conditional models and direct OFM posterior sampling. Neural Processes provide efficient amortized prediction, but do not explicitly separate reusable prior learning from test-time Bayesian posterior inference. OFM learns a valid function-space flow-matching prior with finite-dimensional marginals on arbitrary query sets, but direct OFM posterior sampling is less flexible for complex observation operators. FAPS bridges this gap by converting a pretrained OFM prior into a likelihood-guided posterior sampler that avoids explicit prior-density evaluation and supports both functional regression and PDE inverse problems. Table~\ref{tab:faps_comparison_compact} summarizes the main distinctions.

\textbf{Computational time comparison with OFM posterior sampling.}
We benchmark computational efficiency on Matérn GP regression at resolution \(1024\), drawing \(512\) posterior samples with batch size \(32\). Both FAPS and OFM posterior sampling use \(20\) ODE steps, and the Hutchinson batch size for OFM posterior sampling is set to \(32\). The model weights occupy \(0.62\) GB of GPU memory. FAPS reaches a peak memory usage of \(0.97\) GB, corresponding to only \(0.35\) GB of additional runtime memory, and takes \(54.28\) seconds. 

Direct OFM posterior sampling is computationally expensive because evaluating the flow-prior likelihood requires integrating the full probability-flow trajectory and estimating the divergence term, which involves gradients of the velocity model at each ODE step. Consequently, OFM posterior sampling reaches \(17.9\) GB peak memory and takes \(1537.6\) seconds, about \(50\times\) higher runtime memory and \(29\times\) longer runtime than FAPS. With the adaptive \texttt{dopri5} solver, its peak memory further increases to \(34.1\) GB and runtime to \(3924.64\) seconds, about \(98\times\) higher runtime memory and \(72\times\) longer runtime. These results demonstrate the computational advantage of FAPS, which avoids explicit prior-density evaluation and expensive likelihood estimation.

\begin{table}[h]
\footnotesize
\centering
\renewcommand{\arraystretch}{1.15}
\caption{Computational efficiency comparison on Matérn GP regression at resolution \(1024\). 
We draw \(512\) posterior samples with batch size \(32\). The model weights occupy \(0.62\) GB of GPU memory; runtime memory denotes the additional memory beyond model weights.}
\label{tab:computational-efficiency}
\begin{tabular}{c c c c c c c}
\hline
Method & ODE solver & ODE steps & Peak memory & Runtime memory & Runtime & Memory / Time \\
\hline
FAPS 
& Euler 
& \(20\) 
& \(0.97\) GB 
& \(0.35\) GB
& \(54.28\) s 
& \(1.0\times\) / \(1.0\times\) \\

OFM post. samp.
& Euler 
& \(20\) 
& \(17.9\) GB 
& \(17.3\) GB
& \(1537.6\) s 
& \(\sim 50\times\) / \(\sim 29\times\) \\

OFM post. samp. 
& \texttt{dopri5} 
& adaptive 
& \(34.1\) GB 
& \(33.5\) GB
& \(3924.64\) s 
& \(\sim 98\times\) / \(\sim 72\times\) \\
\hline
\end{tabular}
\label{fig:app_comp_cost}
\end{table}

\begin{table}[t]
\centering
\footnotesize
\renewcommand{\arraystretch}{1.22}
\caption{High-level comparison of FAPS, OFM posterior sampling, and Neural Processes.}
\begin{tabularx}{0.98\linewidth}{lLLL}
\toprule
\textbf{Aspect} 
& \textbf{FAPS (Ours)} 
& \textbf{OFM posterior sampling} 
& \textbf{Neural Processes} \\
\midrule

Core idea &
Likelihood-guided posterior sampling with pretrained function-space flow priors. &
Posterior inference from OFM-induced finite-dimensional marginals. &
Amortized conditional prediction from context to targets. \\

Prior / posterior separation &
Explicit reusable prior plus test-time posterior sampler. &
Explicit reusable prior; posterior sampling tied to OFM density/conditioning. &
No explicit reusable prior-posterior decomposition. \\

Prior density required &
No explicit prior-density evaluation. &
Typically uses tractable OFM marginal likelihoods. &
Not applicable. \\

Observation operator &
General \(y=\mathcal A(u_1)+\epsilon\). &
Direct function observations. &
Mostly direct context-target observations. \\

PDE inverse problems &
Naturally supported via \(P_\Omega\mathcal G_\phi\). &
Not naturally supported. &
Not naturally supported. \\

Posterior correction &
Flow transport + Langevin correction + re-bridging. &
Direct OFM-based posterior sampling. &
Learned conditional decoder. \\

Correlation-aware updates &
Low-rank covariance-preconditioned likelihood correction. &
No explicit correction preconditioner. &
Implicit through learned representation. \\

\bottomrule
\end{tabularx}
\label{tab:faps_comparison_compact}
\end{table}

\paragraph{Comparison with diffusion-based posterior sampling.}
\label{app:diffusion_comparison}

We compare FAPS with representative diffusion-based posterior samplers for PDE inverse problems in Table~\ref{tab:diffusion_posterior_sampling_comparison}. Existing methods mainly differ in how the generative prior is trained and how the forward PDE model is incorporated during posterior inference. DiffusionPDE and FunDPS learn diffusion priors from PDE-generated data, often through joint input--solution representations or PDE-state distributions. This can be expensive when high-fidelity paired simulations are costly, and the resulting priors are typically specialized to a particular PDE family or discretization. According to~\citep{lin_decoupled_2026}, these guidance-based methods are plagued by a Jensen gap and over-smoothing, resulting in reconstructions that miss fine-grained physical details. DDIS mitigates this issue by decoupling the learned coefficient prior from a neural-operator forward surrogate. FAPS follows the same decoupled principle, but replaces the diffusion prior with a function-space flow-matching prior whose finite-dimensional marginals are consistent across query sets. This allows FAPS to perform posterior inference on variable discretizations and enables zero-shot PDE inverse inference on finer meshes not seen during prior training.

A second important distinction lies in the posterior correction step. 
Although DDIS is also decoupled, its DAPS-style Langevin correction injects isotropic white noise in the discretized coefficient space, making the correction grid-coordinate-wise rather than function-space-aware. 
This is suboptimal for sparse PDE inverse problems, where posterior uncertainty is highly correlated and the forward PDE map (e.g. differential PDE solver) can be sensitive to unrealistic high-frequency  (from white noise). 
FAPS instead uses covariance-preconditioned Langevin dynamics, injecting sample-like (\(\mathcal N(0,C_X)\)) smooth noise and preconditioning the likelihood gradient by \(C_X\). 
As a result, posterior correction follows sample-like function-space correlations and propagates sparse observation information more coherently across the unknown field.

We further emphasize that Relative \(L^2\) (the primary metric used in previous study~\citep{huang_diffusionpde_2024, yao_guided_2026, lin_decoupled_2026}) is only a reconstruction metric and, by itself, is not a reliable measure of posterior sampling quality in sparse and noisy inverse problems. It measures the distance between a single reconstruction and one held-out ground-truth field, but does not assess whether the generated samples faithfully represent the posterior distribution \(p(u_1\mid y)\). This distinction is crucial because sparse PDE inverse problems are generally ill-posed: many input fields may be consistent with the same sparse observations. Consequently, methods that collapse to a posterior mean or MAP-like estimate may achieve a lower Relative \(L^2\) while failing to capture posterior diversity and uncertainty. Similar posterior-collapse behavior has been observed in diffusion posterior sampling, where samplers can concentrate on restricted solution sets even in simple Gaussian settings~\citep{zhang_improving_2025,xu_rethinking_2025}. Relative \(L^2\) becomes more informative only when observations are sufficiently dense, so that posterior uncertainty is small and the conditional distribution is close to a Dirac measure.

Since DiffusionPDE and FunDPS are both PDE-focused diffusion baselines trained from PDE-generated data, we use FunDPS as the representative function-space diffusion baseline in the qualitative comparison, while retaining DiffusionPDE in the quantitative reconstruction tables.

\begin{table*}[t]
\centering
\footnotesize
\renewcommand{\arraystretch}{1.18}
\setlength{\tabcolsep}{5pt}
\caption{
Compact comparison between FAPS and representative diffusion-based PDE posterior samplers.
DiffusionPDE is included in the quantitative comparisons but omitted here since it is conceptually close to PDE-specific diffusion baselines.
}
\label{tab:diffusion_posterior_sampling_comparison}
\begin{tabularx}{\textwidth}{
p{0.16\textwidth}
>{\raggedright\arraybackslash}X
>{\raggedright\arraybackslash}X
>{\raggedright\arraybackslash}X}
\toprule
\textbf{Aspect}
& \textbf{FAPS (Ours)}
& \textbf{FunDPS}
& \textbf{DDIS} \\
\midrule

Problem scope
& Functional regression and PDE inverse problems
& PDE inverse / conditional sampling
& PDE inverse problems \\

Prior learning
& Decoupled function-space flow prior
& Joint PDE diffusion prior
& Decoupled diffusion prior \\

Backbone
& FNO or UNet
& FNO
& UNet \\

Discretization
& Arbitrary query sets; unseen meshes
& Multi-resolution function-space diffusion
& Fixed diffusion grid / backbone \\

Posterior sampling
& Flow transport + covariance Langevin + re-bridging
& Guided reverse diffusion
& Decoupled annealed diffusion sampling \\

Noise Injection
& Correlated
& Not available
& Uncorrelated \\

Practical flexibility
& Irregular meshes, manifold data, finite/infinite- dimensional FM priors
& Case-dependent guidance tuning
& Fixed (regular) grid setting \\

Performance \& cost
& Comparable or better; lower overhead
& Competitive; higher overhead
& Strong on some tasks; higher overhead \\

\bottomrule
\end{tabularx}
\end{table*}

\section{Additional Results}
\label{app:additional_results}

In this section, we provide additional quantitative and qualitative results. 
Table~\ref{tab:FAPS_GP_Gibbs} reports functional regression performance on the nonstationary Gibbs-kernel GP benchmark. 
FAPS substantially outperforms the baselines and remains stable as the query resolution increases. 
In contrast, NDP, FlowNP, and direct OFM posterior sampling exhibit degraded posterior-distribution accuracy at higher resolutions.

For PDE inverse problems, we provide additional posterior samples in 
Figs.~\ref{fig:app_darcy}, \ref{fig:app_helmholtz}, \ref{fig:app_poisson}, and~\ref{fig:app_ns}. 
These examples further illustrate that FAPS produces coherent posterior input fields and corresponding posterior predictive solution fields across different PDE inverse settings.

\begin{table*}[h]
\caption{Comparison  with baseline models on 1D GP with nonstationary Gibbs kernel. Lower metrics are better, Best performance in bold. }
\centering
{
\begin{tabular}{@{}cccccc@{}} 
\toprule
\multicolumn{1}{c}{Dataset $\rightarrow$} & \multicolumn{2}{c}{Gibbs GP - Query size=128} & \multicolumn{2}{c}{Gibbs GP - Query size=512} \\ 
\cmidrule(lr){2-3} \cmidrule(lr){4-5}  
Algorithm $\downarrow$ Metric $\rightarrow$  & SWD & MMD & SWD & MMD \\ 
\midrule

TNP & $8.94 \cdot 10^{-1}$ & $6.53 \cdot 10^{-1}$ &$8.50 \cdot 10^{-1}$ & $6.32 \cdot 10^{-1}$\\
CNP & $3.00 \cdot 10^{-1}$ & $2.92 \cdot 10^{-1}$ &$2.78 \cdot 10^{-1}$ & $2.67 \cdot 10^{-1}$  \\
ANP & $2.70 \cdot 10^{-1}$ & $2.19 \cdot 10^{-1}$ & $2.78 \cdot 10^{-1}$& $2.12 \cdot 10^{-1}$ \\
NDP & $2.62 \cdot 10^{-1}$ & $2.69 \cdot 10^{-1}$ &$3.05 \cdot 10^{-1}$ & $2.61 \cdot 10^{-1}$ \\
FlowNP & $3.47 \cdot 10^{-1}$ & $3.58 \cdot 10^{-1}$ &$4.39 \cdot 10^{-1}$ & $4.66 \cdot 10^{-1}$  \\
OFM & $ 2.87 \cdot 10^{-1}$ & $2.62 \cdot 10^{-1}$ & $3.69 \cdot 10^{-1}$ & $ 3.63 \cdot 10^{-1}$ \\    \rowcolor{yellow!25} $\mathbf{FAPS  (Ours)}$ &  $ \mathbf{1.94 \cdot 10^{-1}}$ & $ \mathbf{1.67 \cdot 10^{-1}}$ & $ \mathbf{1.55 \cdot 10^{-1}}$ & $ \mathbf{1.36 \cdot 10^{-1}}$\\
\bottomrule

\end{tabular}
}
\label{tab:FAPS_GP_Gibbs}
\end{table*}

\section{Zero-shot Super-resolution for PDE Inverse Problems}
\label{app:zero_shot_super_resolution}

In this section, we evaluate whether FAPS can perform PDE inverse inference on query resolutions higher than those used during training. 
For Darcy flow, we train the OFM prior and the FNO PDE surrogate at the base resolution, and then perform posterior sampling directly on a \(160\times160\) query grid without retraining either model. 
Sparse noisy observations are taken from the corresponding solution field, and FAPS is used to infer the high-resolution input coefficient field. 

As shown in Fig.~\ref{fig:sup_darcy}, the posterior predictive solutions remain consistent with the observed solution values, while the inferred input-field samples preserve the sharp interface structure of the Darcy coefficient. 
The posterior uncertainty is elevated near interfaces and other ambiguous regions, where sparse solution observations do not fully determine the input field. 
These results demonstrate that FAPS can combine function-space priors with likelihood-guided correction to enable zero-shot super-resolution PDE inverse inference.

\begin{figure*}[ht]
    \vspace*{-0.3cm}
    \centering
    \includegraphics[width=1.1\textwidth,trim= 6cm 0 4cm 0, clip]{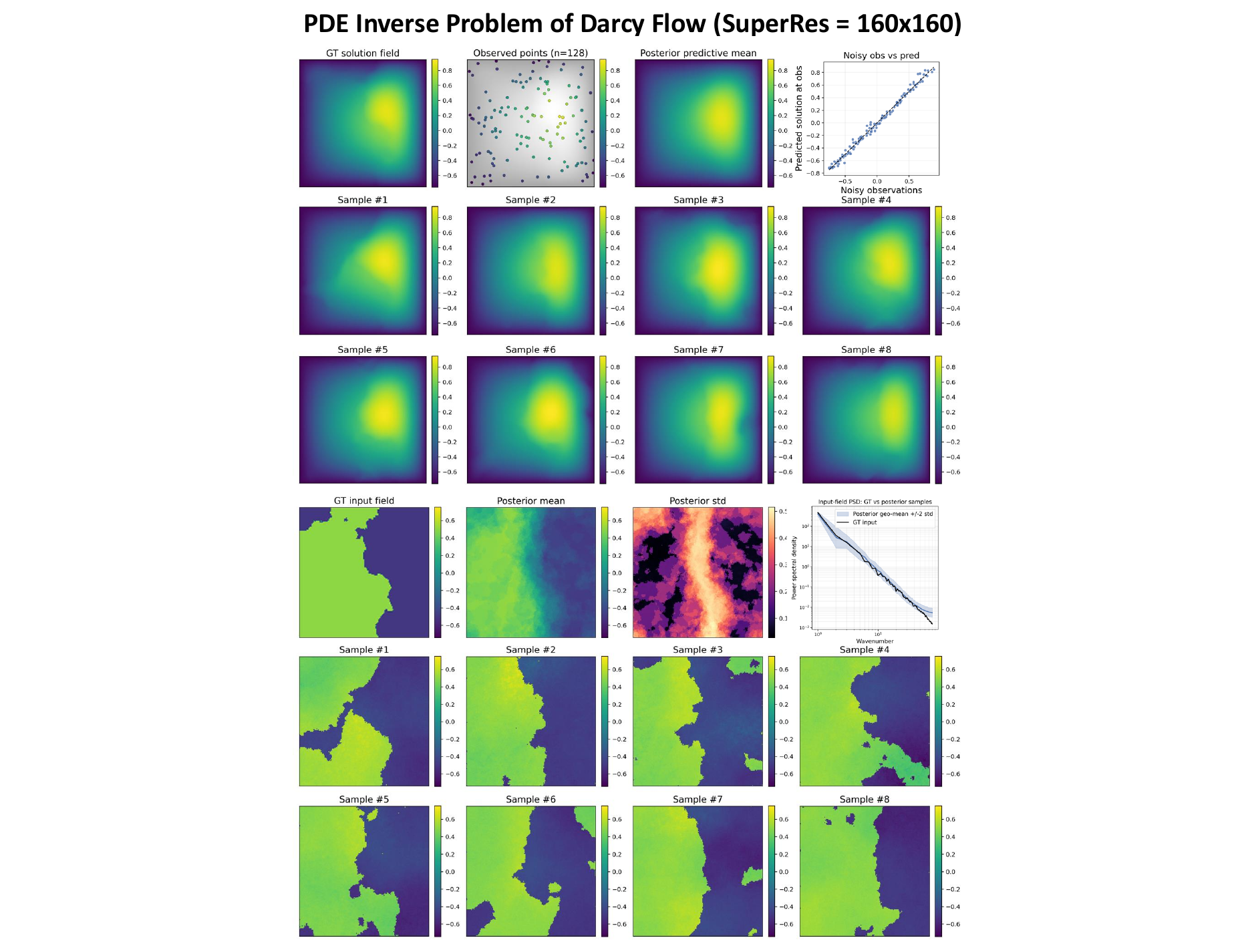}
    \caption{Zero-shot super-resolution PDE inverse problem with 128 noisy solution observation ($0.5\%$) on Darcy flow on resolution $160\times160$. FAPS infers high-resolution input fields from sparse solution observations without retraining the prior.}
    \label{fig:sup_darcy}
\end{figure*}

\clearpage

\begin{figure*}[t]
    \vspace{-0.3cm}
    \centering
    \includegraphics[width=0.45\textwidth]{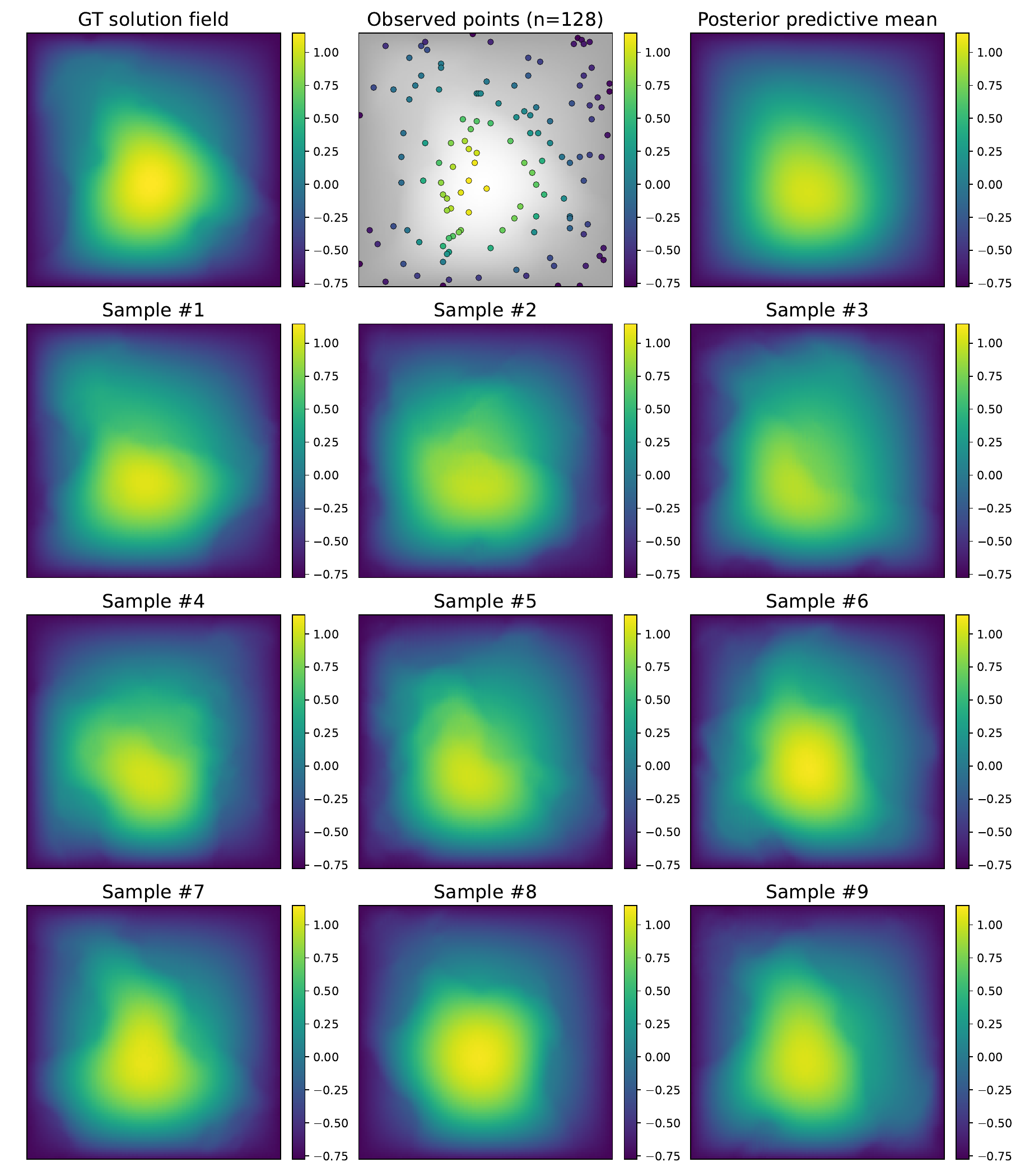}
    \quad
    \includegraphics[width=0.45\textwidth]{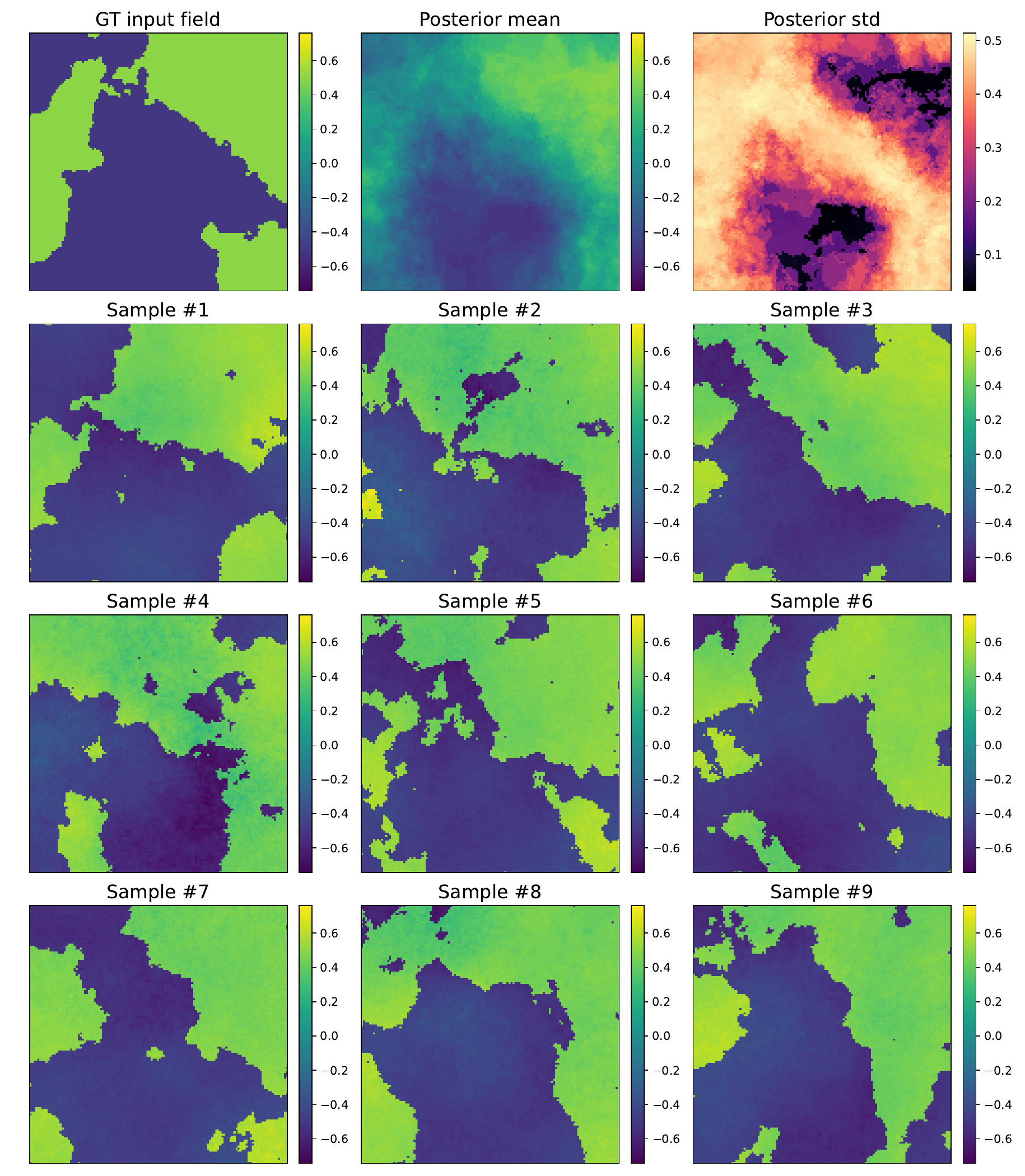}

    \caption{
    Visualization of posterior sampling for the Darcy flow inverse problem on resolution $128\times128$. 
    (\textbf{left}) the solution/output field, with the ground truth, sparse observations, posterior predictive mean, and posterior predictive samples. 
    (\textbf{right}) the input coefficient field, with the ground truth, posterior mean, posterior standard deviation, and posterior samples.
    }
    \label{fig:app_darcy}
\end{figure*}

\begin{figure*}[t]
    \vspace{-0.3cm}
    \centering
    \includegraphics[width=0.45\textwidth]{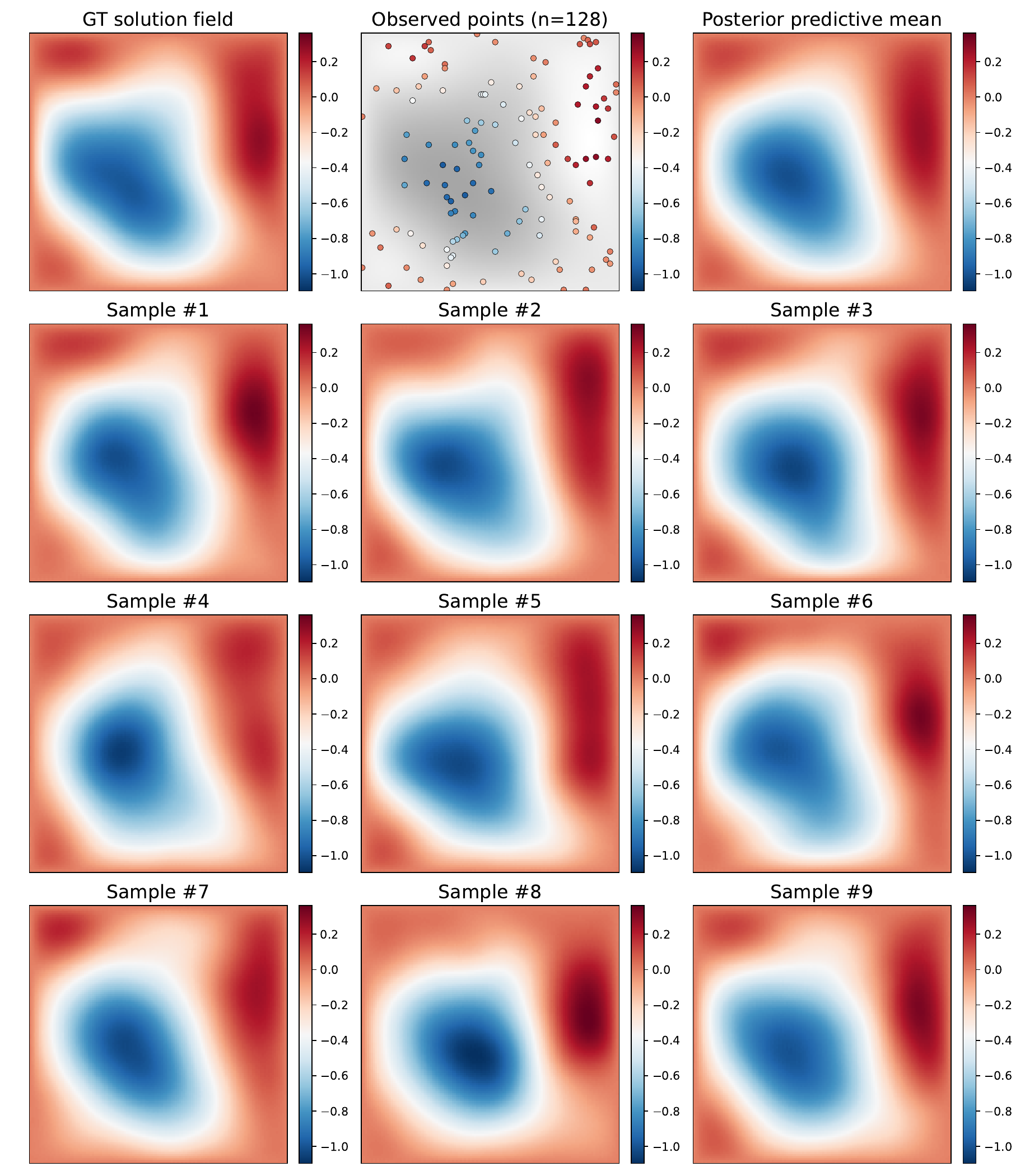}
    \quad
    \includegraphics[width=0.45\textwidth]{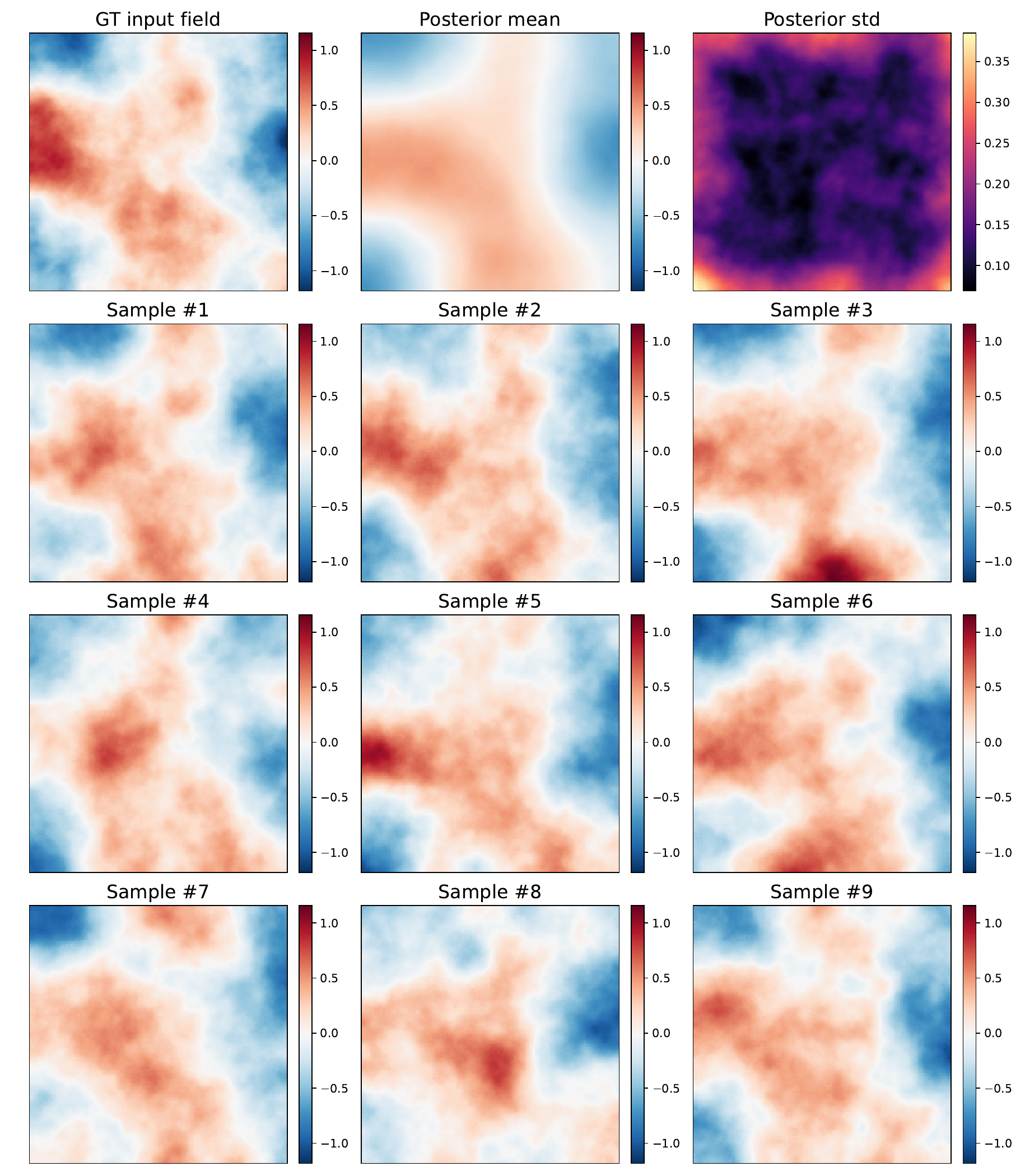}

    \caption{
    Visualization of posterior sampling for the Poisson inverse problem on resolution $128\times128$. 
    (\textbf{left}) the solution/output field, with the ground truth, sparse observations, posterior predictive mean, and posterior predictive samples. 
    (\textbf{right}) the input coefficient field, with the ground truth, posterior mean, posterior standard deviation, and posterior samples.
    }
    \label{fig:app_poisson}
\end{figure*}

\begin{figure*}[t]
    \vspace{-0.3cm}
    \centering
    \includegraphics[width=0.45\textwidth]{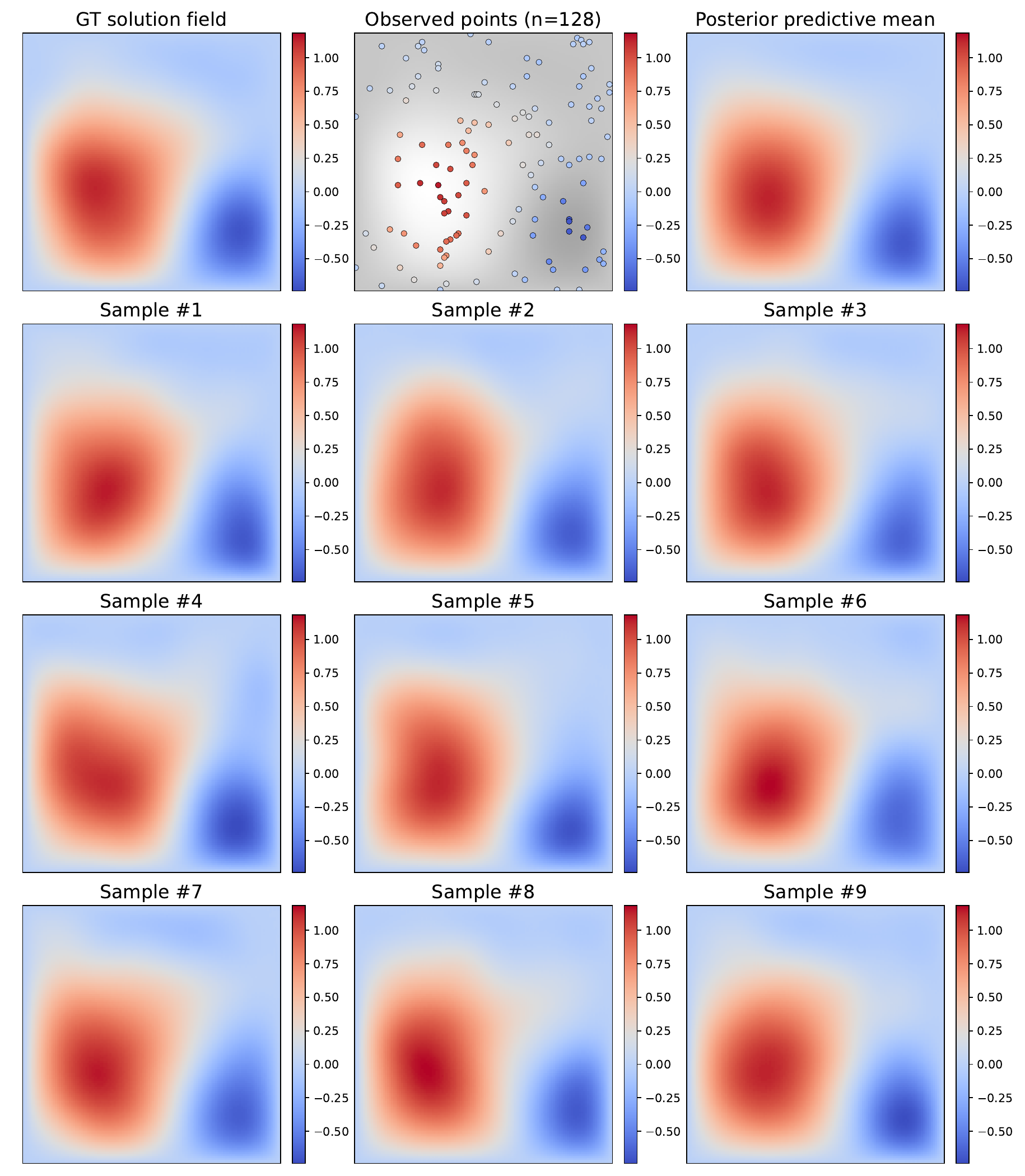}
    \quad
    \includegraphics[width=0.45\textwidth]{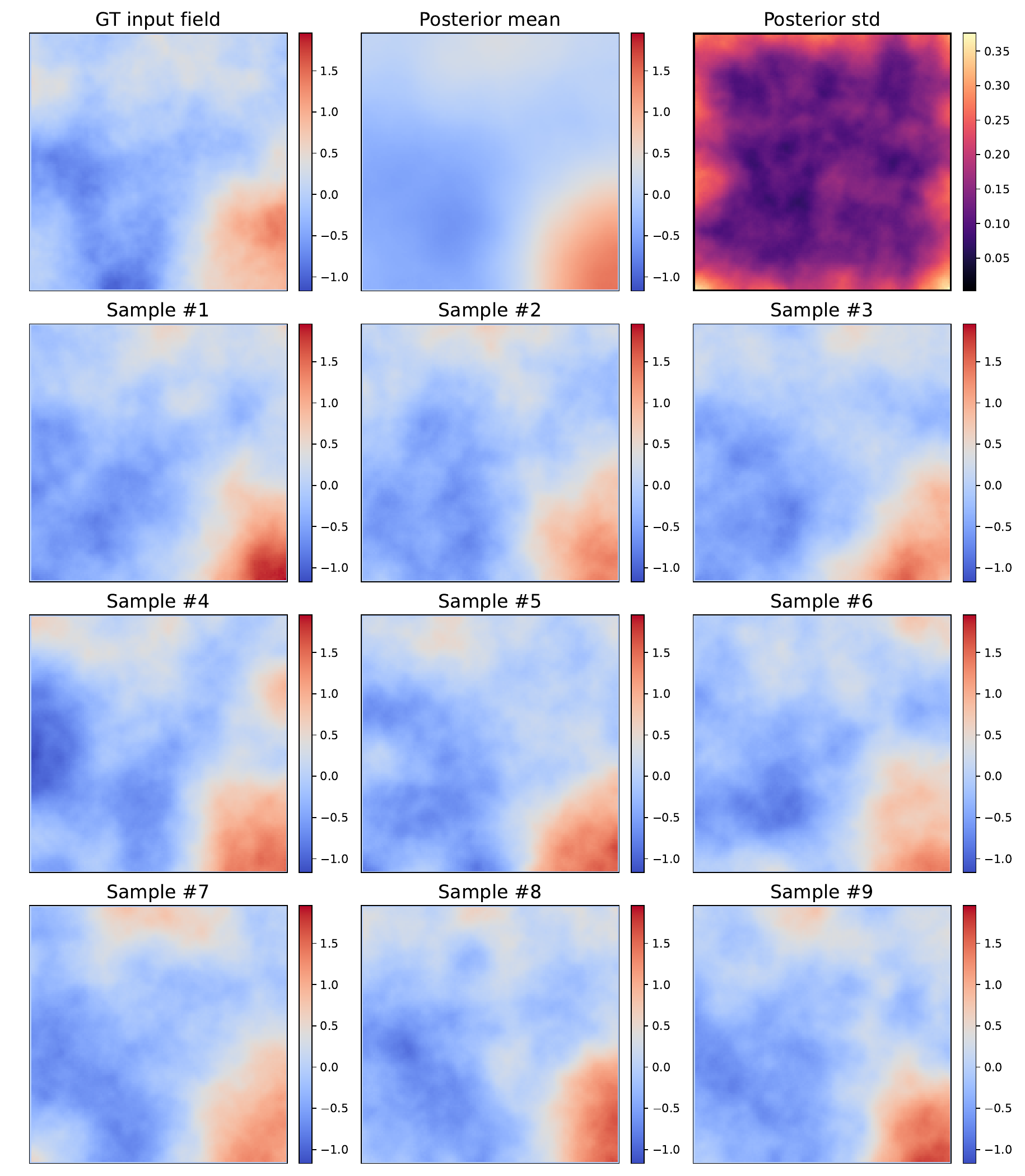}

    \caption{
    Visualization of posterior sampling for the Helmholtz inverse problem on resolution $128\times128$. 
    (\textbf{left}) the solution/output field, with the ground truth, sparse observations, posterior predictive mean, and posterior predictive samples. 
    (\textbf{right}) the input coefficient field, with the ground truth, posterior mean, posterior standard deviation, and posterior samples.
    }
    \label{fig:app_helmholtz}
\end{figure*}

\begin{figure*}[t]
    \vspace{-0.3cm}
    \centering
    \includegraphics[width=0.45\textwidth]{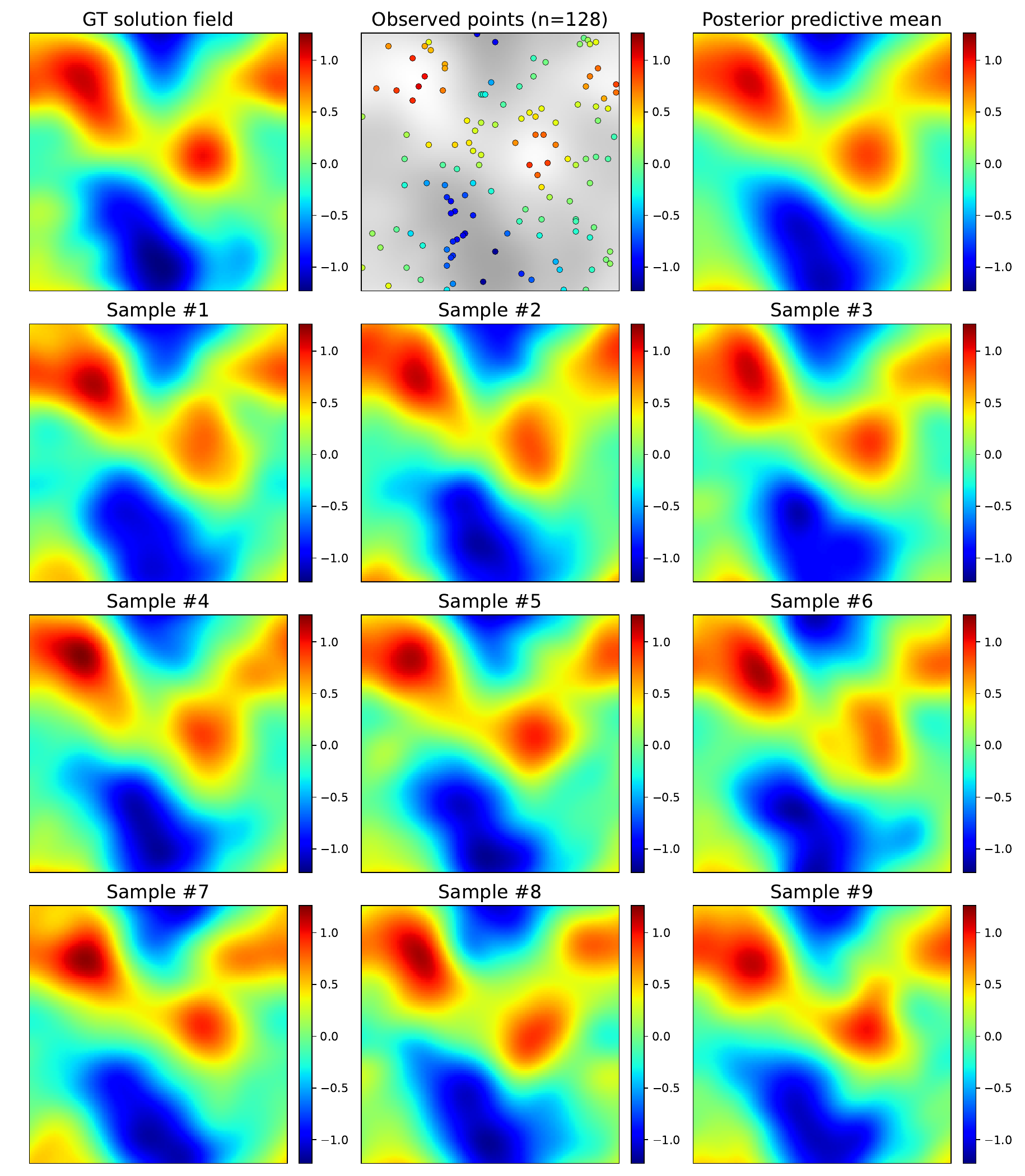}
    \quad
    \includegraphics[width=0.45\textwidth]{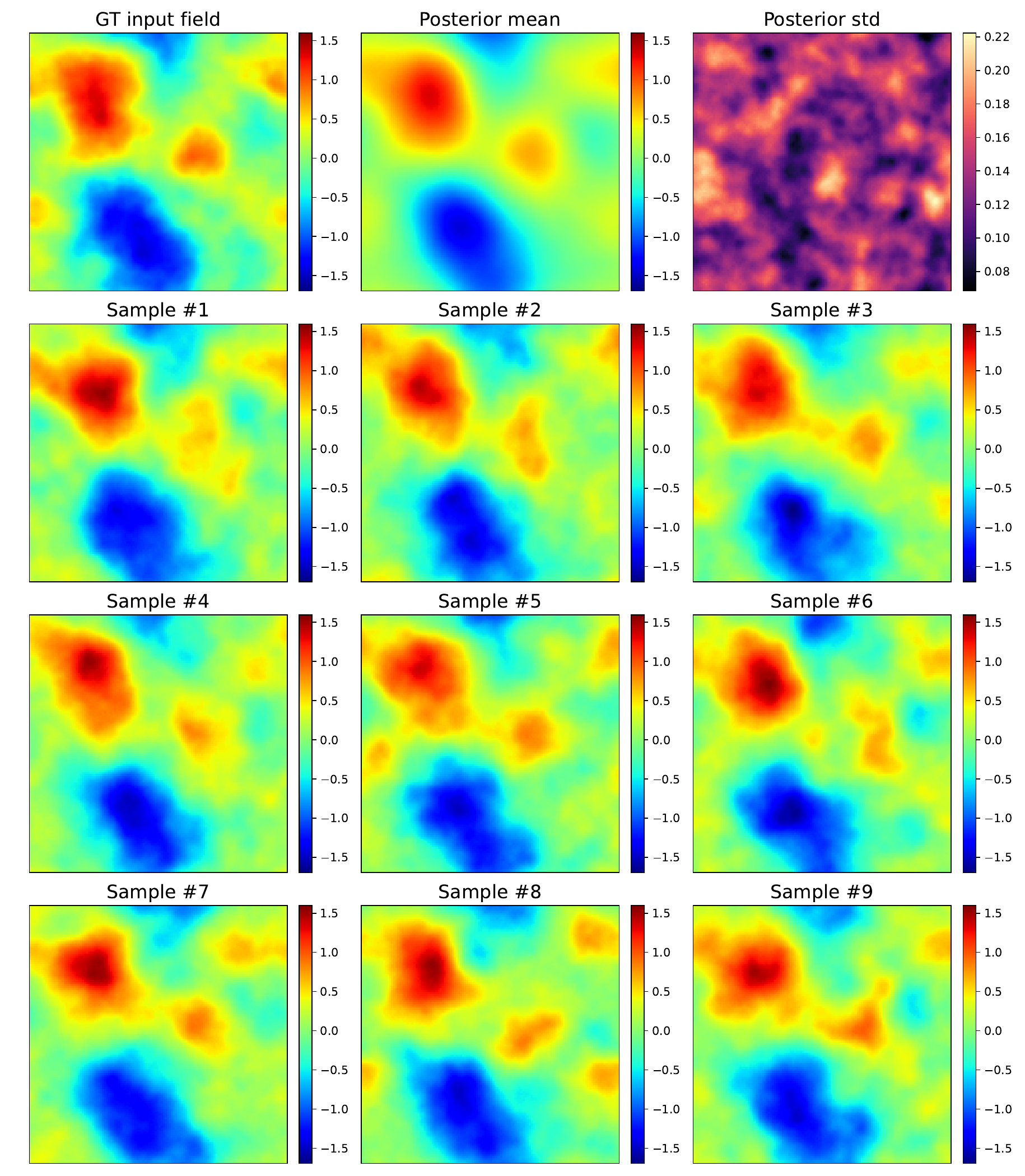}

    \caption{
    Visualization of posterior sampling for the Navier-Stokes inverse problem on resolution $128\times128$. 
    (\textbf{left}) the solution/output field, with the ground truth, sparse observations, posterior predictive mean, and posterior predictive samples. 
    (\textbf{right}) the input coefficient field, with the ground truth, posterior mean, posterior standard deviation, and posterior samples.
    }
    \label{fig:app_ns}
\end{figure*}


\end{document}